\DeclareMathOperator*{\argmin}{arg\,min}
\DeclareMathOperator{\proj}{proj}
\theoremstyle{plain}
\newtheorem{proposition}{Proposition}[section]
\newenvironment{manualproposition}[1]{%
	\manualpropositioninner
}{\endmanualpropositioninner}
\theoremstyle{definition}
\newtheorem{assumption}{Assumption}[section]
\newtheorem{definition}{Definition}[section]
\theoremstyle{remark}
\newtheorem{remark}{Remark}[section]
\title{Explaining Latent Representations with a Corpus of Examples}
\author{%
  Jonathan Crabbé \\
  University of Cambridge\\
  \texttt{jc2133@cam.ac.uk} \\
  \And
  Zhaozhi Qian \\
  University of Cambridge \\
  \texttt{zq224@maths.cam.ac.uk} \\
  \AND
  Fergus Imrie \\
  UCLA\\
  \texttt{imrie@g.ucla.edu} \\
  \And
  Mihaela van der Schaar \\
  University of Cambridge \\
  The Alan Turing Institute \\
  UCLA \\
  \texttt{mv472@cam.ac.uk} \\
}
\begin{document}

\maketitle

\newcommand{\R}{\mathbb{R}}
\renewcommand{\H}{\mathcal{H}}
\newcommand{\X}{\mathcal{X}}
\newcommand{\Y}{\mathcal{Y}}
\newcommand{\D}{\mathcal{D}}
\newcommand{\Dtrain}{\mathcal{D}_{\text{train}}}
\newcommand{\Dtest}{\mathcal{D}_{\text{test}}}
\newcommand{\Dout}{\mathcal{D}_{\text{out}}}
\newcommand{\Duk}{\mathcal{D}_{\text{UK}}}
\newcommand{\Dusa}{\mathcal{D}_{\text{USA}}}
\newcommand{\Dmnist}{\mathcal{D}_{\text{MNIST}}}
\newcommand{\Demnist}{\mathcal{D}_{\text{EMNIST}}}
\newcommand{\Cusa}{\mathcal{C}_{\text{USA}}}
\newcommand{\Cuk}{\mathcal{C}_{\text{UK}}}
\newcommand{\CH}{\mathcal{CH}}
\newcommand{\C}{\mathcal{C}}
\newcommand{\T}{\mathcal{T}}
\newcommand{\N}{\mathbb{N}}
\newcommand{\f}{\textbf{f}}
\newcommand{\w}{\textbf{w}}
\newcommand{\p}{\textbf{p}}
\newcommand{\g}{\textbf{g}}
\renewcommand{\l}{\textbf{l}}
\renewcommand{\j}{\textbf{j}}
\newcommand{\IG}{\text{IG}}
\newcommand{\x}{\textbf{x}}
\newcommand{\h}{\textbf{h}}
\newcommand{\z}{\textbf{z}}
\renewcommand{\P}{\textbf{P}}
\newcommand{\y}{\textbf{y}}
\newcommand{\A}{\textbf{A}}
\newcommand{\linec}{\boldsymbol{\gamma}^c}
\newcommand{\curvec}{\g \circ \linec}
\newcommand{\norm}[2]{\parallel #1 \parallel_{#2}}
\newcommand{\partderiv}[2]{\frac{\partial #1}{\partial #2}}
\newcommand{\deriv}[2]{\frac{d #1}{d #2}}
\maketitle

\begin{abstract}

Modern machine learning models are complicated. Most of them rely on convoluted latent representations of their input to issue a prediction. To achieve greater transparency than a black-box that connects inputs to predictions, it is necessary to gain a deeper understanding of these latent representations. To that aim, we propose SimplEx: a user-centred method that provides example-based explanations with reference to a freely selected set of examples, called the corpus. SimplEx uses the corpus to improve the user’s understanding of the latent space with post-hoc explanations answering two questions: (1) Which corpus examples explain the prediction issued for a given test example? (2) What features of these corpus examples are relevant for the model to relate them to the test example? SimplEx provides an answer by reconstructing the test latent representation as a mixture of corpus latent representations. Further, we propose a novel approach, the Integrated Jacobian, that allows SimplEx to make explicit the contribution of each corpus feature in the mixture. Through experiments on tasks ranging from mortality prediction to image classification, we demonstrate that these decompositions are robust and accurate. With illustrative use cases in medicine, we show that SimplEx empowers the user by highlighting relevant patterns in the corpus that explain model representations. Moreover, we demonstrate how the freedom in choosing the corpus allows the user to have personalized explanations in terms of examples that are meaningful for them.

\end{abstract}

\section{Introduction and related work}

How can we make a machine learning model convincing? If accuracy is undoubtedly necessary, it is rarely sufficient. As these models are used in critical areas such as medicine, finance and the criminal justice system, their black-box nature appears as a major issue~\cite{Lipton2016, Ching2018, Tjoa2020}. With the necessity to address this problem, the landscape of explainable artificial intelligence (XAI) developed~\cite{BarredoArrieta2020, Das2020}. A first approach in XAI is to focus on \emph{white-box models} that are interpretable by design. However, restricting to a class of inherently interpretable models often comes at the cost of lower prediction accuracy~\cite{Rai2019}. In this work, we rather focus on \emph{post-hoc explainability} techniques. These methods aim at improving the interpretability of black-box models by complementing their predictions with various kinds of explanations. In this way, it is possible to understand the prediction of a model without sacrificing its prediction accuracy.   

 \emph{Feature importance explanations} are undoubtedly the most widespread type of post-hoc explanations. Popular feature importance methods include SHAP~\cite{Shapley1953, Datta2016, Lundberg2017}, LIME~\cite{Ribeiro2016}, Integrated Gradients~\cite{Sundararajan2017}, Contrastive Examples \cite{Dhurandhar2018} and Masks~\cite{Fong2017, Fong2019,Crabbe2021}. These methods complement the model prediction for an input example with a score attributed to each input feature. This score reflects the importance of each feature for the model to issue its prediction.  Knowing which features are important for a model prediction certainly provides more information on the model than the prediction by itself. However, these methods do not provide a reason as to why the model pays attention to these particular features. 

Another approach is to contextualize each model prediction with the help of relevant examples. In fact, recent works~\cite{Nguyen2021} have demonstrated that human subjects often find example-based explanations more insightful than feature importance explanations. Complementing the model's predictions with relevant examples previously seen by the model is commonly known as \emph{Case-Based Reasoning} (CBR)~\cite{Caruana1999, Bichindaritz2006, Keane2019}. The implementations of CBR generally involve models that create a synthetic representation of the dataset, where examples with similar patterns are summarized by prototypes~\cite{Kim2015, Kim2016, Gurumoorthy2017}. At inference time, these models relate new examples to one or several prototypes to issue a prediction. In this way, the patterns that are used by the model to issue a prediction are made explicit with the help of relevant prototypes. A limitation of this approach is the restricted model architecture. The aforementioned procedure requires to opt for a family of models that rely on prototypes to issue a prediction. This family of model might not always be the most suitable for the task at hand. This motivates the development of generic post-hoc methods that make few or no assumption on the model. 

The most common approach to provide example-based explanations for a wide variety of models mirrors feature importance methods. The idea is to complement the model prediction by attributing a score to each training example. This score reflects the importance of each training example for the model to issue its prediction. This  score will typically be computed by simulating the effect of removing each training instance from the training set on the learned model~\cite{Cook1982}. Popular examples of such methods include Influence Functions~\cite{Koh2017} and Data-Shapley~\cite{Ghorbani2019, Ghorbani2020}. These methods offer the advantage of being flexible enough to be used with a wide variety of models. They produce scores that describe what the model could have predicted if some examples were absent from the training set. This is very interesting in a data valuation perspective. However, in an explanation perspective, it is not clear how to reconstruct the model predictions with these importance scores.  

So far, we have only discussed works that provide explanations of a model output, which is the tip of the iceberg. Modern machine learning models involve many convoluted transformations to deduce the output from an input. These transformations are expressed in terms of intermediate variables that are often called \emph{latent variables}. Some treatment of these latent variables is necessary if we want to provide explanations that take the model complexity into account. This motivates several works that push the explainability task beyond the realm of model outputs. Among the most noticeable contributions in this endeavour, we cite \emph{Concept Activation Vectors} that create a dictionary between human friendly concepts (such as the presence of stripes in an image) and their representation in terms of latent vectors~\cite{Kim2017}. Another interesting contribution is the \emph{Deep k-Nearest Neighbors} model that contextualizes the prediction for an example with its Nearest Neighbours in the space of latent variables, the \emph{latent space}~\cite{Papernot2018}. An alternative exploration of the latent space is offered by the \emph{representer theorem} that allows, under restrictive assumptions, to use latent vectors to decompose a model's prediction in terms of its training examples~\cite{Yeh2018}. 

\begin{wrapfigure}{r}{0.7\textwidth} 
\vspace{-0.9cm}	
\begin{center}
\includegraphics[width=0.7\textwidth]{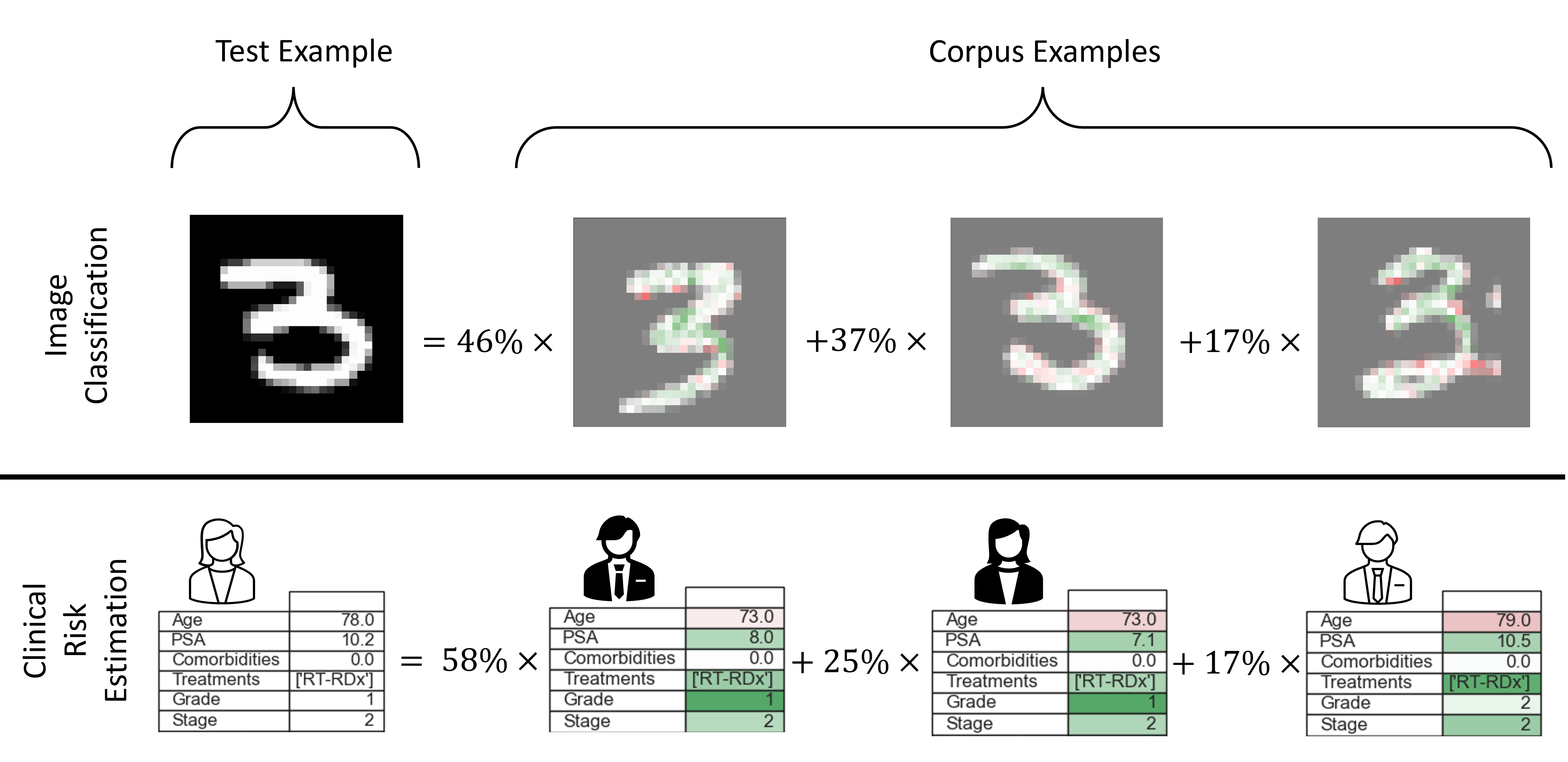}
\end{center}
\vspace{-0.4cm}
\caption{An example of corpus decomposition with SimplEx.}
\label{fig:simplex_examples}
\vspace{-0.5cm}
\end{wrapfigure}

\textbf{Contribution} In this work, we introduce a novel approach called SimplEx that lies at the crossroad of the above research directions.  SimplEx outputs post-hoc explanations in the form of Figure~\ref{fig:simplex_examples}, where the model's prediction and latent representation for a test example is approximated as a mixture of examples extracted from a \emph{corpus} of examples. In each case, SimplEx highlights the role played by each feature of each corpus example in the latent space decomposition. SimplEx centralizes many functionalities that, to the best of our knowledge, constitute a leap forward from the previous state of the art. (1)~SimplEx gives the user freedom to choose the corpus of examples whom with the model's predictions are decomposed. Unlike previous methods such as the representer theorem, there is no need for this corpus of examples to be equal to the model's training set. This is particularly interesting for two reasons: (a) the training set of a model is not always accessible (b) the user might want explanations in terms of examples that make sense for them. For instance, a doctor might want to understand the predictions of a risk model in terms of patients they know. (2)~The decompositions of SimplEx are valid, both in latent and output space. We show that, in both cases, the corpus mixtures discovered by SimplEx offer significantly more precision and robustness than previous methods such as Deep k-Nearest Neighbors and the representer theorem. (3)~SimplEx details the role played by each feature in the corpus mixture. This is done by introducing Integrated Jacobians, a generalization of Integrated Gradients that makes the contribution of each corpus feature explicit in the latent space decomposition. This creates a bridge between two research directions that have mostly developed independently: feature importance and example-based explanations~\cite{Keane2019, Keane2019b}.
In  Section~\ref{sec:user_study} of the supplementary material, we report a user-study involving 10 clinicians. This study supports the significance of our contribution.

\iffalse
--------------------------------------------------------
 For the time being, it is sufficient to interpret green and red features as the ones that respectively increase and decrease the quality of the corpus approximation for the test example in latent space.

similarity = reconstruct test prediction as a linear combination of examples prediction ; difference = not relying on optimality, no assumption made on training (regularization), mixtures are easier to understand than weights, we have a rigorous way to detect when the corpus does not explain the test prediction. We can make an experiment comparing both methods with and without regularization (the representer relies on the assumption that the model was trained with a $L^2$ regularization).

Synthetic control \cite{Abadie2010, Athey2017, Amjad2018, Abadie2020}.

\begin{figure} 
\begin{center}
\includegraphics[width=0.8\textwidth]{Images/simplex_examples}
\end{center}
\vspace{-0.2cm}
\caption{An example of corpus decomposition with SimplEx.}
\label{fig:simplex_examples}
\vspace{-0.5cm}
\end{figure}

\fi

\section{SimplEx} \label{sec:problem}

In this section, we formulate our method rigorously. Our purpose is to explain the black-box prediction for an unseen test example with the help of a set of known examples that we call the \emph{corpus}. We start with a clear statement of the family of black-boxes for which our method applies. Then, we detail how the set of corpus examples can be used to decompose a black-box representation for the unseen example. Finally, we show that the corpus decomposition can offer explanations at the feature level. 

\subsection{Preliminaries}
Let $\X \subseteq \mathbb{R}^{d_X}$ be an input (or feature) space and $\Y \subseteq \mathbb{R}^{d_Y}$ be an output (or label) space, where $d_X$ and $d_Y$ are respectively the dimension of the input and the output space. Our task is to explain individual predictions of a given black-box $\f : \X \rightarrow \Y$. In order to build our explainability method, we need to make an assumption on the family of black-boxes that we wish to interpret. 
\begin{assumption}[Black-box Restriction] \label{assumption-linear}
We restrict to black-boxes $\f : \X \rightarrow \Y$ that can be decomposed as $\f = \l \circ \g $, where $\g : \X \rightarrow \H $ maps an input $\x \in \X$ to a latent vector $\h = \g\left(\x \right) \in \H$ and $\l : \H \rightarrow \Y $ linearly maps\footnote{The map can in fact be affine. In the following, we omit the bias term $\textbf{b} \in \Y$ that can be reabsorbed in $\g$.} a latent vector $\h \in \H$ to an output $\y = \l(\h) = \A \h \in \Y$. In the following, we call $\H \subseteq \R^{d_H}$ the \emph{latent space}. Typically, this space has higher dimension than the output space $d_H > d_Y$.
\end{assumption}
\begin{remark}
In the context of deep-learning, this assumption requires that the last hidden layer maps linearly to the output. While it is often the case, it is crucial in the following since we will use the fact that linear combinations in latent space correspond to linear combinations in output space. Our purpose is to gain insights on the structure of the latent space.
\end{remark}
\begin{remark}
This assumption is compatible with regression and classification models, we just need to clarify what we mean by \emph{output} in the case of classification. If $\f$ is a classification black-box that predicts the probabilities for each class, it will typically take the form in Assumption~\ref{assumption-linear} up to a normalizing map $\boldsymbol{\phi}$ (typically a softmax): $\f = \boldsymbol{\phi} \circ \l \circ \g $. In this case, we ignore\footnote{There is no loss of information as the output allows us to reconstruct class probabilities $\textbf{p}$ via $\textbf{p} = \boldsymbol{\phi} \left( \y \right) $.} the normalizing map $\boldsymbol{\phi}$ and define the output to be $\y = (\l \circ \g) \left( \x \right)$.  
\end{remark}
Our explanations for $\f$ rely on a set of examples that we call the corpus. These examples will typically (but not necessarily) be a representative subset of the black-box training set. The corpus set has to be understood as a set of reference examples that we want to use as building blocks to interpret unseen examples. In order to index these examples, it will be useful to denote by $[n_1:n_2]$ the set of natural numbers between the natural numbers $n_1$ and $n_2$ with $ n_1 < n_2$. Further, we denote $[n] = [1:n]$ the set of natural numbers between $1$ and $n \geq 1$. The corpus of examples is a set $\C = \left\{ \x^c \mid c \in [C]  \right\}$ containing $C \in \N^*$ examples $\x^c \in \X$. In the following, superscripts are labels for examples and subscripts are labels for vector components. In this way, $x_i^c$ has to be understood as the component $i$ of corpus example $c$. 

\subsection{A corpus of examples to explain a latent representation}
Our purpose is to understand a prediction $\f(\x)$ for an unseen test example $\x$ with the help of the corpus. How can we decompose the prediction $\f(\x)$ in terms of corpus predictions $\f(\x^c)$? A naive attempt would be to express $\x$ as a mixture of inputs from the corpus $\C$: $\x = \sum_{c=1}^C w^c \x^c$ with weights $w^c \in [0,1]$ that sum to one $\sum_{c=1}^C w^c = 1$. The weakness of this approach is that the signification of the mixture weights is not conserved if the black-box $\f$ is not a linear map: $ \f( \sum_{c=1}^Cw^c \x^c ) \neq \sum_{c=1}^C w^c \f(\x^c)$.

Fortunately, Assumption~\ref{assumption-linear} offers us a better vector space to perform a corpus decomposition of the unseen example $\x$. We first note that the map $\g$ induces a latent representation of the corpus $\g(\C)~=~\left\{\h^c =  \g(\x^c) \mid \x^c \in \C \right\} \subset \H$. Similarly, $\x$ has a latent representation $\h = \g(\x) \in \H$. Following the above line of reasoning, we could therefore perform a corpus decomposition in latent space $\h = \sum_{c=1}^C w^c \h^c$. Now, by using the linearity of $\l$, we can compute the black-box output of this mixture in latent space: $ \l( \sum_{c=1}^Cw^c \h^c ) = \sum_{c=1}^C w^c \l(\h^c)$. In this case, the weights that are used to decompose the latent representation $\h$ in terms of the latent representation of the corpus $\g(\C)$ also reflect the way in which the black-box prediction $\f(\x)$ can be decomposed in terms of the corpus outputs $\f (\C)$. This hints that the latent space $\H$ is endowed with the appropriate geometry to make corpus decompositions. More formally, we think in terms of the convex hull spanned by the corpus. 
\begin{definition}[Corpus Hull]
The \emph{corpus convex hull} spanned by a corpus $\C$ with latent representation $\g(\C)~=~\left\{\h^c =  \g(\x^c) \mid \x^c \in \C \right\} \subset \H$ is the convex set 
\begin{align*}
\CH \left( \C \right) = \left\{ \sum_{c=1}^C w^c \h^c \ \middle| \ w^c \in [0,1] \ \forall c \in [C] \ \wedge \ \sum_{c=1}^C w^c = 1 \right\}.
\end{align*}
\end{definition}
\begin{remark}
This is the set of latent vectors that are a mixture of the corpus latent vectors.
\end{remark}
At this stage, it is important to notice that an exact corpus decomposition is not possible if $\h \not\in \CH (\C)$. In such a case, the best we can do is to find the element $\hat{\h} \in \CH (\C)$ that best approximates $\h$. If $\H$ is endowed with a norm $\parallel \cdot \parallel_{\H}$, this corresponds to the convex optimization problem
\begin{align}\label{equ-CH_problem}
\hat{\h} = \argmin_{\tilde{\h} \ \in \ \CH (\C)} \parallel \h - \tilde{\h} \parallel_{\H}.
\end{align}
By definition, the corpus representation $\hat{\h}$ of $\h$ can be expanded\footnote{Note that this decomposition might not be unique, more details in Section~\ref{sec:problem_sup} of the supplementary material.} as a mixture of elements from $\g (\C)$: $\hat{\h} = \sum_{c=1}^C w^c \h^c$. The weight can naturally be interpreted as a measure of importance in the reconstruction of $\h$ with the corpus. Clearly, $w^c \approx 0$ for some $c \in [C]$ indicates that $\h^c$ does not play a significant role in the corpus representation $\hat{\h}$ of $\h$. On the other hand,  $w^c \approx 1$ indicates that $\h^c$ generates the corpus representation $\hat{\h}$ by itself. 

At this stage, a natural question arises: how can we know if the corpus approximation $\hat{\h}$ is a good approximation for $\h$? The answer is given by the residual vector $\h - \hat{\h}$ that measures the shift between the latent representation $\h = \g (\x)$ and the corpus hull $\CH (\C)$. It is natural to use this residual vector to detect examples that cannot be explained with the selected corpus of examples $\C$.
\begin{definition}[Corpus Residual]\label{def-corpus_residual}
The \emph{corpus residual} associated to a latent vector $\h \in \H$ and its corpus representation $\hat{\h} \in \CH(\C)$ solving \eqref{equ-CH_problem} is the quantity
\begin{align*}
r_{\C}(\h) = \parallel \h - \hat{\h} \parallel_{\H}  = \min_{\tilde{\h} \ \in \ \CH (\C)}\parallel \h - \tilde{\h} \parallel_{\H}.
\end{align*}
\end{definition}

\begin{wrapfigure}{r}{0.5\textwidth} 
\vspace{-.9cm}
\begin{center} 
\includegraphics[width=0.48\textwidth]{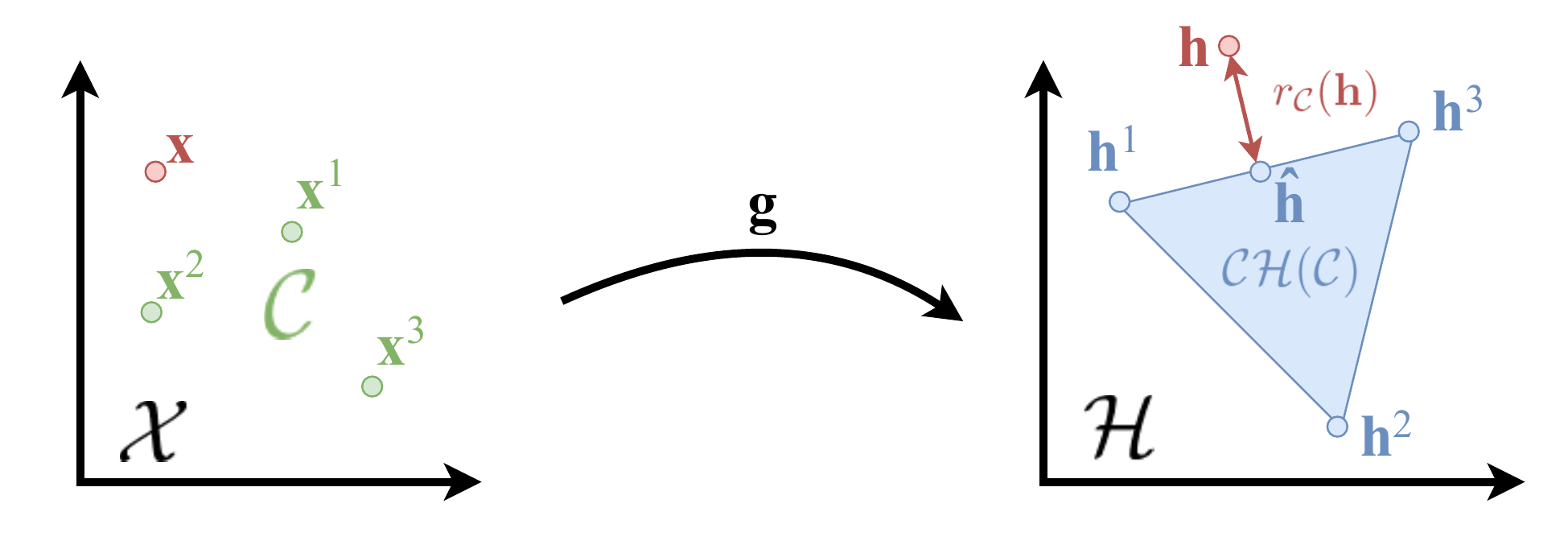}
\end{center}
\vspace{-.3cm}
\caption{Corpus convex hull and residual.}
\vspace{-.4cm}
\label{fig-corpus}
\end{wrapfigure}
In Section~\ref{subsec:precision_output} of the supplementary material, we show that the corpus residual also controls the quality of the corpus approximation in output space $\Y$. All the corpus-related quantities that we have introduced so far are summarized visually in Figure~\ref{fig-corpus}. Note that this Figure is a simplification of the reality as $C$ will typically be larger than 3 and $d_X, d_H$ will typically be higher than 2. We are now endowed with a rigorous way to decompose a test example in terms of corpus examples \emph{in latent space}. In the next section, we detail how to pull-back this decomposition to input space.

\subsection{Transferring the corpus explanation in input space}

Now that we are endowed with a corpus decomposition $\hat{\h} = \sum_{c=1}^C w^c \h^c$ that approximates $\h$, it would be convenient to have an understanding of the corpus decomposition in input space $\X$. For the sake of notation, we will assume that the corpus approximation is good so that it is unnecessary to draw a distinction between the latent representation $\h$ of the unseen example $\x$ and its corpus decomposition $\hat{\h}$. If we want to understand the corpus decomposition in input space, a natural approach~\cite{Sundararajan2017} is to fix a baseline input $\x^0$ together with its latent representation $\h^0 = \g(\x^0)$. Let us now decompose the representation shift $\h - \h^0$ in terms of the corpus:
\begin{align} \label{equ-shift_corpus_decomposition}
\h - \h^0 =  \sum_{c=1}^C w^c \left( \h^c - \h^0 \right). 
\end{align}
With this decomposition, we understand the total shift in latent space $\h - \h^0$ in terms of individual contributions from each corpus member. In the following, we focus on the comparison between the baseline and a single corpus example $\x^c$ together with its latent representation $\h^c$ by keeping in mind that the full decomposition \eqref{equ-shift_corpus_decomposition} can be reconstructed with the whole corpus. To bring the discussion in input space $\X$, we interpret the shift in latent space $\h^c - \h^0$ as resulting from a shift $\x^c - \x^0$ in the input space. We are interested in the contribution of each feature to the latent space shift. To decompose the shift in latent space in terms of the features, we parametrize the shift in input space with a line $\linec: [0,1] \rightarrow \X$ that goes from the baseline to the corpus example: $\linec (t) = \x^0 + t \cdot (\x^c - \x^0)$ for $t \in [0,1]$. Together with the black-box, this line induces a curve in latent space $\curvec: [0,1] \rightarrow \H$ that goes from the baseline latent representation $\h^0$ to the corpus example latent representation $\h^c$. Let us now use an infinitesimal decomposition of this curve to make the contribution of each input feature explicit. If we assume that $\g$ is differentiable at $\linec(t)$, we can use a first order approximation of the curve at the vicinity of $t \in (0,1)$ to decompose the infinitesimal shift in latent space:
\begin{align*}
\underbrace{\curvec (t + \delta t) - \curvec (t)}_{\text{Infinitesimal shift in latent space}} & =  \sum_{i= 1 }^{d_X} \frac{\partial \g}{\partial x_i} \bigg|_{\linec(t)} \frac{d \gamma^c_i}{dt}\bigg|_{t} \ \delta t + o(\delta t) \\
& =  \sum_{i= 1 }^{d_X} \frac{\partial \g}{\partial x_i} \bigg|_{\linec(t)} (x^c_i - x^0_i) \cdot \delta t + o(\delta t),
\end{align*}    
where we used $\gamma^c_i(t) = x^0_i + t \cdot (x^c_i - x^0_i)$ to obtain the second equality. In this decomposition, each input feature contributes additively to the infinitesimal shift in latent space. It follows trivially that the contribution of the input feature corresponding to input dimension $i \in [d_X]$ is given by
\begin{align*}
\delta \textbf{j}^c_i (t) =  (x^c_i - x^0_i) \cdot \frac{\partial \g}{\partial x_i} \bigg|_{\linec(t)} \delta t \hspace{0.5cm} \in \H.
\end{align*}
In order to compute the overall contribution of feature $i$ to the shift, we let $\delta t \rightarrow 0$ and we sum the infinitesimal contributions along the line $\linec$. If we assume\footnote{This is not restrictive, DNNs with ReLU activation functions satisfy this assumption for instance.} that $\g$ is almost everywhere differentiable, this sum converges to an integral in the limit $\delta t \rightarrow 0$ . This motivates the following definitions.

\begin{definition}[Integrated Jacobian \& Projection] The \emph{integrated Jacobian} between a baseline $(\x^0, \h^0 = \g(\x^0))$ and a corpus example $(\x^c, \h^c = \g(\x^c)) \in \X \times \H$ associated to feature $i \in [d_X]$ is
\begin{align*}
\textbf{j}_i^c = \left( x^c_i - x^0_i \right) \int_0^1 \frac{\partial \g}{\partial x_i} \bigg|_{\linec(t)} \ dt \hspace{0.5cm} \in \H,
\end{align*}
where $\linec (t) \equiv \x^0 + t \cdot \left( \x^c - \x^0 \right)$ for $t \in [0,1]$. This vector indicates the shift in latent space induced by feature $i$ of corpus example $c$ when comparing the corpus example with the baseline. To summarize this contribution to the shift $\h - \h^0$ described in \eqref{equ-shift_corpus_decomposition}, we define the \emph{projected Jacobian}
\begin{align*}
p^c_i = \proj_{\h - \h^0} \left( \textbf{j}_i^c \right) \equiv \frac{\langle \ \h - \h^0 \ , \ \textbf{j}_i^c \ \rangle}{\langle \ \h - \h^0 \ , \ \h - \h^0 \ \rangle} \hspace{0.5cm} \in \R,
\end{align*}  
where  $\langle\cdot{,}\cdot\rangle$ is an inner product for  $\H$ and the normalization is chosen for the purpose of Proposition~\ref{prop:ij_properties}.
\end{definition}

\begin{remark}
The integrated Jacobian can be seen as a latent-space generalization of Integrated Gradients~\cite{Sundararajan2017}. In Section~\ref{subsec:integrated_gradient} of the supplementary material, we establish the relationship between the two quantities: $\text{IG}_i^c = \text{l}(\textbf{j}_i^c)$.
\end{remark}

\begin{wrapfigure}{r}{0.5\textwidth} 
\begin{center} 
\includegraphics[width=0.5\textwidth]{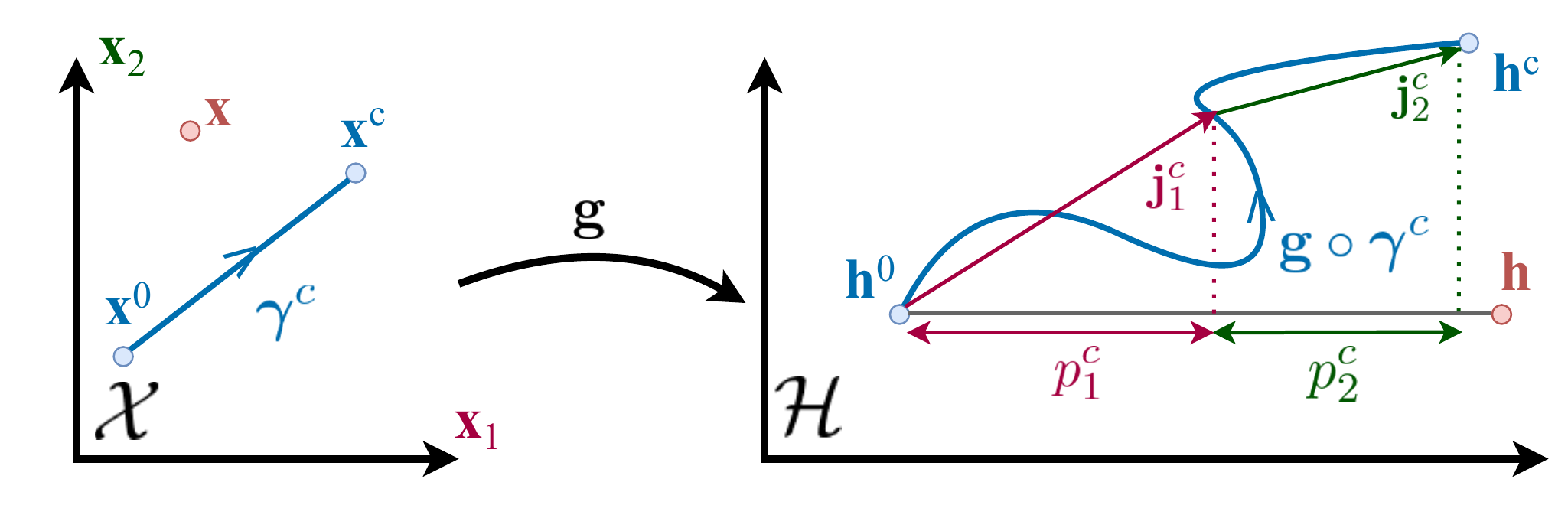}
\end{center}
\caption{Integrated Jacobian and projection.}
\label{fig:jacobian}
\end{wrapfigure}
We summarize the Jacobian quantities in Figure~\ref{fig:jacobian}. By inspecting the figure, we notice that projected Jacobians encode the contribution of feature $i$ from corpus example $c$ to the overall shift in latent space: $p^c_i > 0$ implies that this feature creates a shift pointing in the same direction as the overall shift; $p^c_i < 0$ implies that this feature creates a shift pointing in the opposite direction and $p^c_i = 0$ implies that this feature creates a shift in an orthogonal direction. We use the projections to summarize the contribution of each feature in Figures~\ref{fig:simplex_examples} , \ref{fig:misclassified1} \& \ref{fig:misclassified2}. The colors blue and red indicate respectively a positive and negative projection. In addition to these geometrical insights, Jacobian quantities come with natural properties.

\begin{proposition}[Properties of Integrated Jacobians] \label{prop:ij_properties}
Consider a baseline $(\x^0, \h^0 = \g(\x^0))$ and a test example together with their latent representation $(\x, \h = \g(\x)) \in \X \times \H$. If the shift $\h - \h^0$ admits a decomposition~\eqref{equ-shift_corpus_decomposition}, the following properties hold.\\
\begin{align*}
(A):~\sum_{c=1}^C \sum_{i=1}^{d_X} w^c \textbf{j}^c_i = \h - \h^0 \hspace{1cm} (B):~\sum_{c=1}^C \sum_{i=1}^{d_X} w^c p^c_i = 1.
\end{align*} 
\end{proposition}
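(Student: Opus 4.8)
The plan is to reduce both identities to a single \emph{completeness} lemma for the integrated Jacobian of one corpus example, namely that summing the integrated Jacobians over all input features recovers the total latent shift of that example:
\[
\sum_{i=1}^{d_X} \textbf{j}_i^c = \h^c - \h^0 \qquad \text{for each } c \in [C].
\]
Once this is in hand, property $(A)$ follows by weighting with $w^c$ and invoking the shift decomposition~\eqref{equ-shift_corpus_decomposition}, and property $(B)$ follows by applying the projection to $(A)$ together with linearity. So the whole argument hinges on the lemma.

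To establish the completeness lemma, I would first note that the curve $\curvec(t)$ interpolates between $\h^0$ and $\h^c$, since $\linec(0) = \x^0$ and $\linec(1) = \x^c$ give $\curvec(0) = \h^0$ and $\curvec(1) = \h^c$. Differentiating via the chain rule and using $\frac{d\gamma_i^c}{dt} = x_i^c - x_i^0$ yields, at every $t$ where $\g$ is differentiable, $\frac{d}{dt}\curvec(t) = \sum_{i=1}^{d_X} \frac{\partial \g}{\partial x_i}\big|_{\linec(t)}\,(x_i^c - x_i^0)$. Integrating over $[0,1]$ and interchanging the finite sum with the integral, the right-hand side becomes exactly $\sum_{i=1}^{d_X} \textbf{j}_i^c$, while the fundamental theorem of calculus identifies the left-hand side as $\curvec(1) - \curvec(0) = \h^c - \h^0$. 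The delicate point — and the main obstacle — is justifying the fundamental theorem of calculus when $\g$ is only almost-everywhere differentiable (as for ReLU networks): the integrand is only defined a.e., so to conclude $\int_0^1 \frac{d}{dt}\curvec\,dt = \curvec(1) - \curvec(0)$ one needs $\curvec$ to be absolutely continuous along the segment. I would argue this from $\g$ being locally Lipschitz, so that its composition with the affine path $\linec$ is Lipschitz, hence absolutely continuous and differentiable a.e. with the chain-rule derivative above; this is the standard regularity hypothesis flagged in the footnote preceding the definition.

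For property $(A)$, I would weight the lemma by $w^c$ and sum over $c$, obtaining $\sum_{c=1}^C \sum_{i=1}^{d_X} w^c \textbf{j}_i^c = \sum_{c=1}^C w^c (\h^c - \h^0)$, which equals $\h - \h^0$ precisely by the assumed decomposition~\eqref{equ-shift_corpus_decomposition}. For property $(B)$, the key remark is that $\proj_{\h - \h^0}(\cdot)$ is \emph{linear} in its argument, since it equals $\langle \h - \h^0, \cdot\rangle / \langle \h - \h^0, \h - \h^0\rangle$ with a constant denominator and an inner product linear in its second slot. Pulling the finite weighted sum inside the projection and substituting $(A)$ gives $\sum_{c=1}^C\sum_{i=1}^{d_X} w^c p_i^c = \proj_{\h - \h^0}(\h - \h^0)$, and this final projection evaluates immediately to $1$ by the definition of the normalization (the ratio of $\langle \h - \h^0, \h - \h^0\rangle$ with itself), which is exactly the reason that normalization was chosen.
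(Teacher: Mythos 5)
Your proposal is correct and follows essentially the same route as the paper's own proof: the same completeness lemma $\sum_{i=1}^{d_X} \textbf{j}^c_i = \h^c - \h^0$, established via the chain rule along $\curvec$ and the fundamental theorem of calculus, with $(A)$ obtained by weighting with $w^c$ and invoking \eqref{equ-shift_corpus_decomposition}, and $(B)$ by projecting $(A)$ using the linearity of $\proj_{\h - \h^0}$ and the normalization $\proj_{\h - \h^0}(\h - \h^0) = 1$. Your extra justification of the fundamental theorem of calculus under almost-everywhere differentiability (via Lipschitz continuity of $\g \circ \linec$, hence absolute continuity along the segment) is a welcome refinement of a point the paper leaves to a footnote, but it does not change the argument's structure.
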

\begin{proof}
The proof is provided in Section~\ref{subsec:proof_prop} of the supplementary material. 
\end{proof}

These properties show that the integrated Jacobians and their projections are the quantities that we are looking for: they transfer the corpus explanation into input space. The first equality decomposes the shift in latent space in terms of contributions $w^c \textbf{j}^c_i$ arising from each feature of each corpus example. The second equality sets a natural scale to the contribution of each feature. For this reason, it is natural to use $w^c p^c_i$ to measure the contribution of feature $i$ of corpus example $c$.

\section{Experiments} \label{sec:experiment}

In this section, we evaluate quantitatively several aspects of our method. In a first experiment, we verify that the corpus decomposition scheme described in Section~\ref{sec:problem} yields good approximations for the latent representation of test examples extracted from the same dataset as the corpus examples. In a realistic clinical use case, we illustrate the usage of SimplEx in a set-up where different corpora reflecting different datasets are used. The experiments are summarized below. In Section~\ref{sec:experiments_sup} of the supplementary material, we provide more details and further experiments with time series and synthetic data. The code for our method and experiments is available on the Github repository \url{https://github.com/JonathanCrabbe/Simplex}. All the experiments have been replicated on different machines. 

\subsection{Precision of corpus decomposition} \label{subsec:precision_experiment}

\begin{figure}
\begin{center}
  \begin{subfigure}{.4\textwidth}
  \centering
  \includegraphics[width=\linewidth]{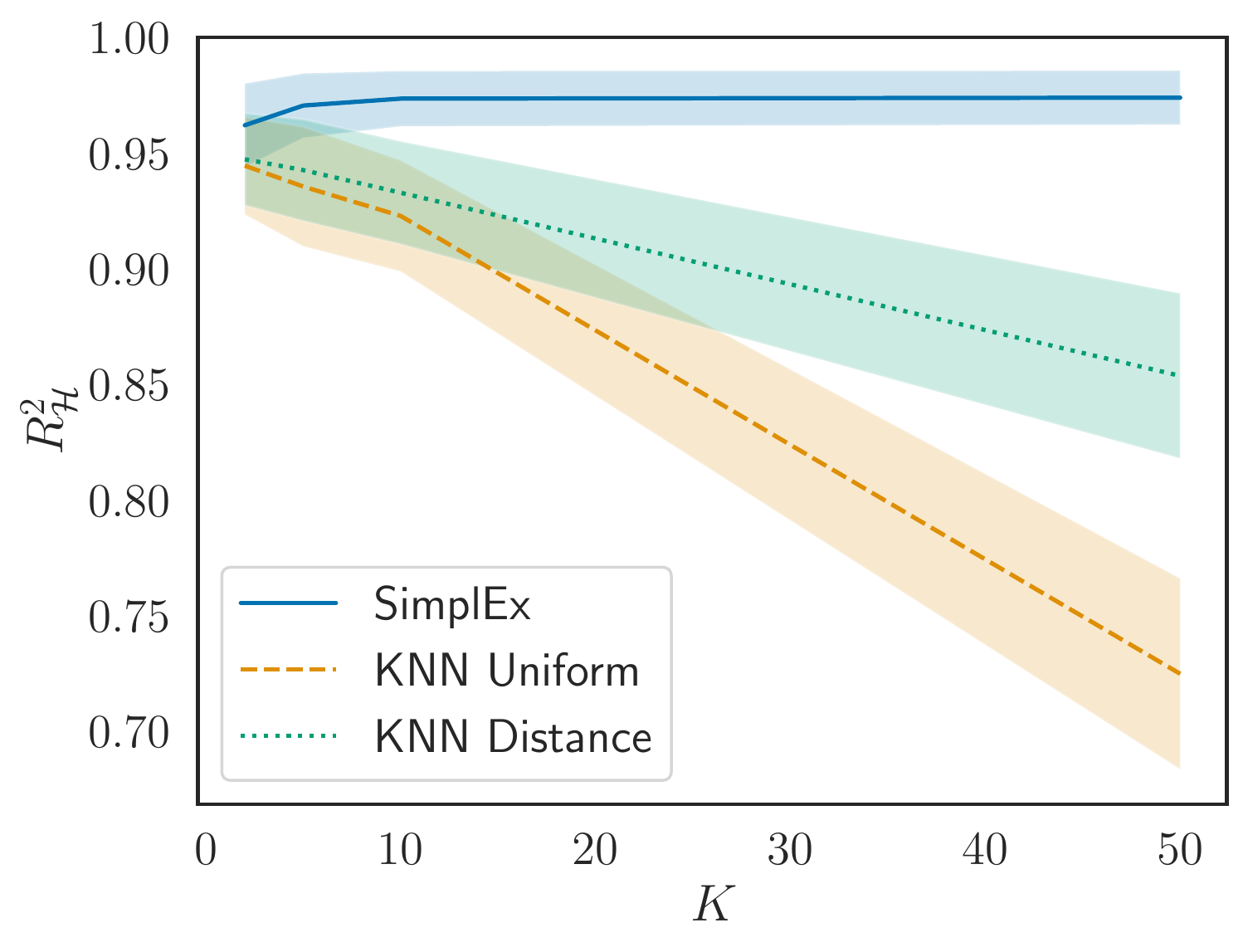}
  \caption{$R^2_{\H}$ score for the latent approximation}
  \label{fig:prostate_quality_a}
\end{subfigure}
\begin{subfigure}{.4\textwidth}
  \centering
  \includegraphics[width=\linewidth]{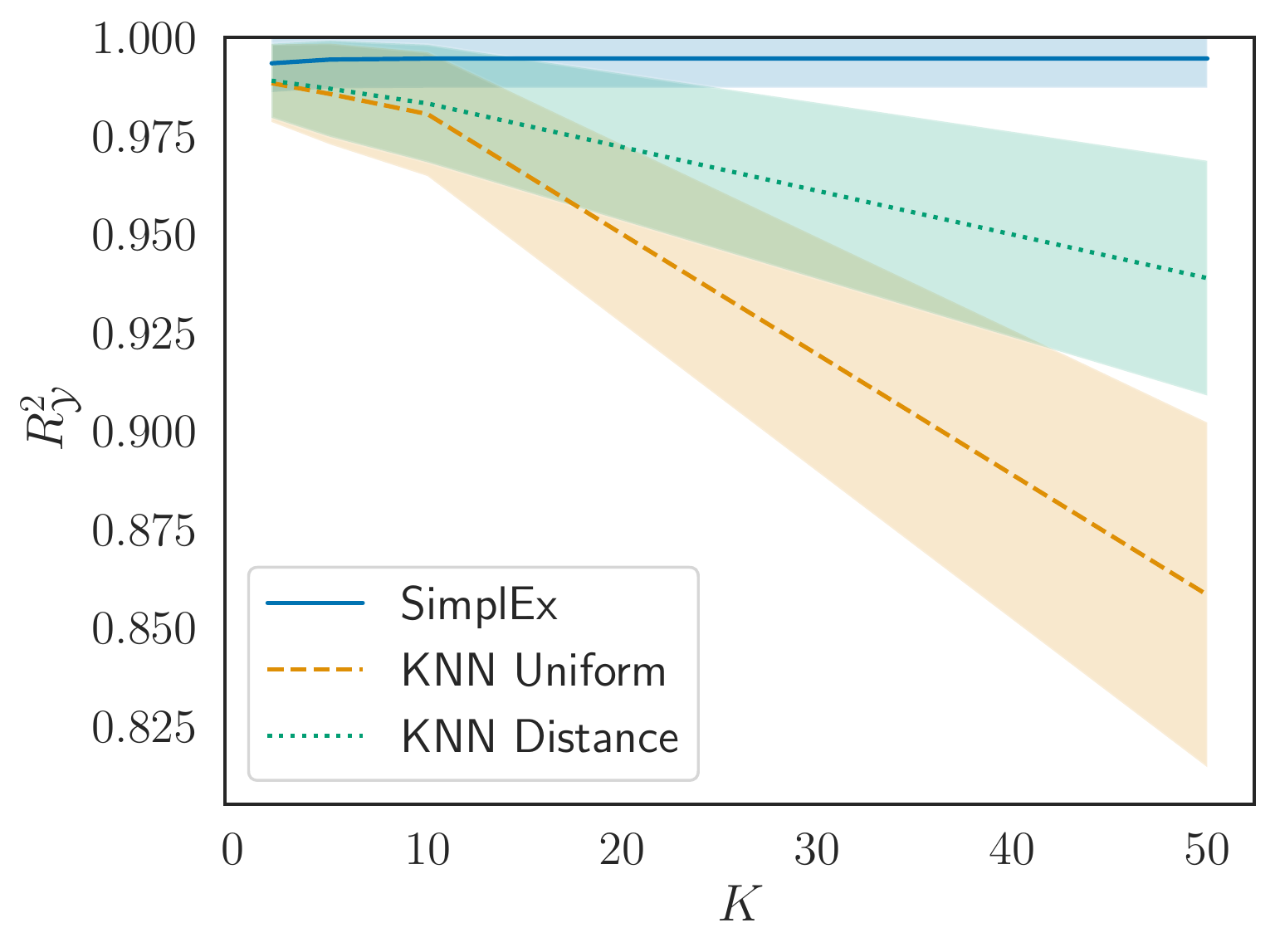}
  \caption{$R^2_{\Y}$ score for the output approximation}
  \label{fig:prostate_quality_b}
\end{subfigure}
\end{center}
\caption{Precision of corpus decomposition for prostate cancer (avg $\pm$ std).}
\label{fig:prostate_quality}
\end{figure}

\begin{figure}
\begin{center}
  \begin{subfigure}{.4\textwidth}
  \centering
  \includegraphics[width=\linewidth]{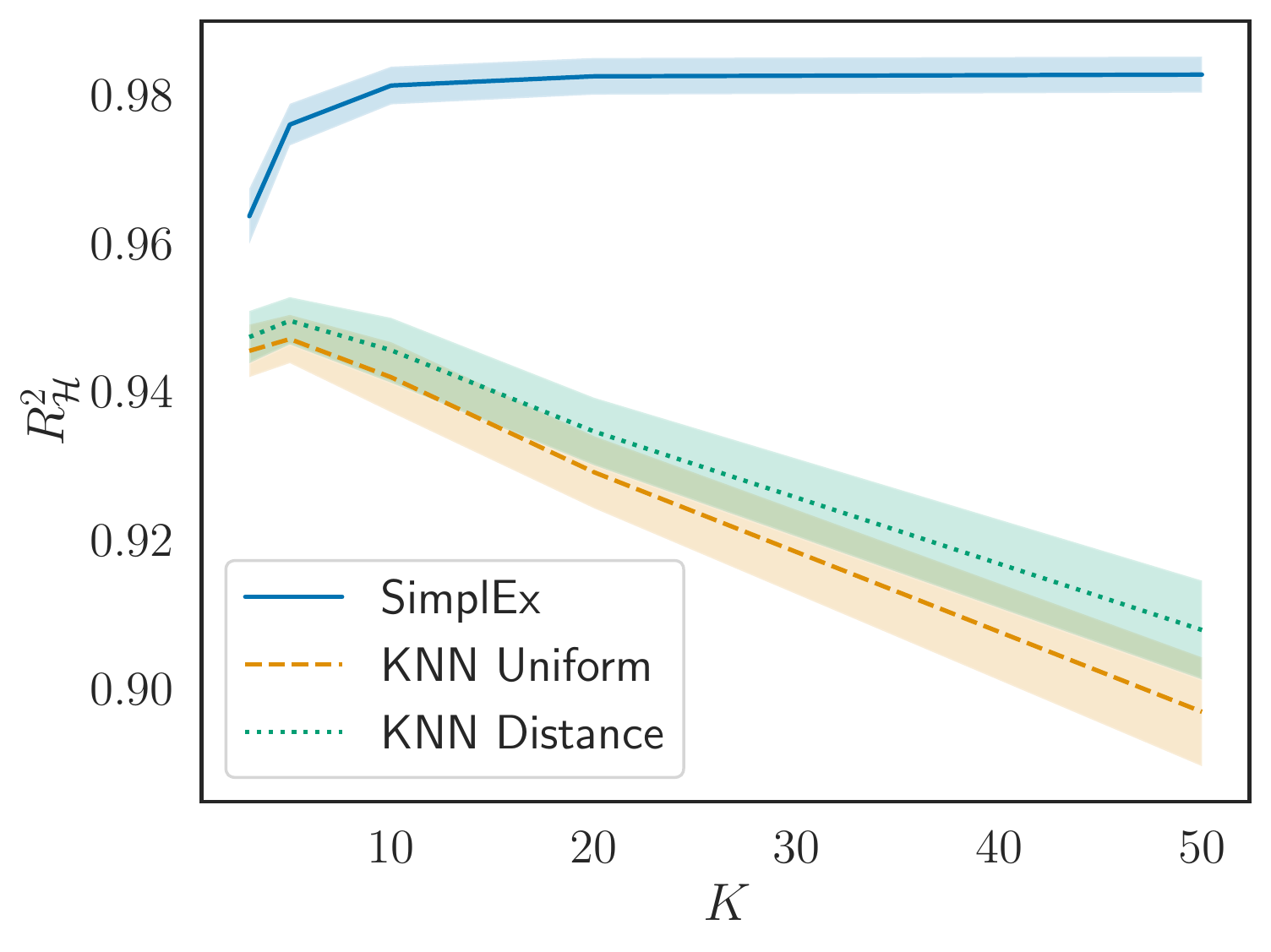}
  \caption{$R^2_{\H}$ score for the latent approximation}
  \label{fig:mnist_quality_a}
\end{subfigure}%
\begin{subfigure}{.4\textwidth}
  \centering
  \includegraphics[width=\linewidth]{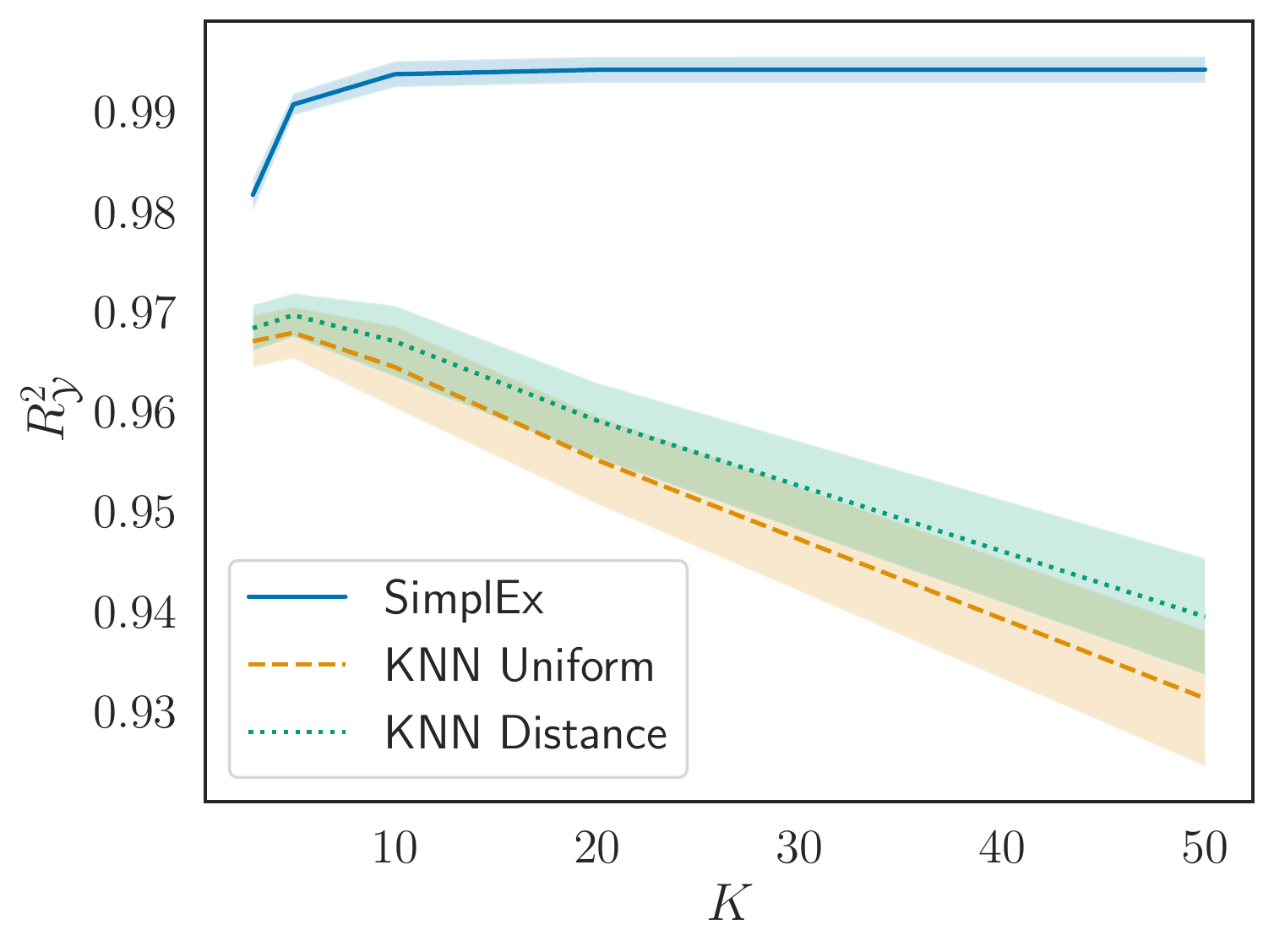}
  \caption{$R^2_{\Y}$ score for the output approximation}
  \label{fig:mnist_quality_b}
\end{subfigure}
\end{center}
\caption{Precision of corpus decomposition for MNIST (avg $\pm$ std).}
\label{fig:mnist_quality}
\vspace{-0.5cm}
\end{figure}

\textbf{Description} The purpose of this experiment is to check if the corpus decompositions described in Section~\ref{sec:problem} allows us to build good approximations of the latent representation of test examples. We start with a dataset $\D$ that we split into a training set $\Dtrain$ and a testing set $\Dtest$. We train a black-box $\f$ for a given task on the training set $\Dtrain$. We randomly sample  a set of corpus examples from the training set $\C \subset \Dtrain$ (we omit the true labels for the corpus examples) and a set of test examples from the testing set $\T \subset \Dtest$. For each test example $\x \in \T$, we build an approximation $\hat{\h}$ for $\h = \g (\x)$ with the corpus examples latent representations. In each case, we let the method use only $K$ corpus examples to build the approximation. We repeat the experiment for several values of $K$.   

\textbf{Metrics} We are interested in measuring the precision of the corpus approximation in latent space and in output space. To that aim, we use the $R^2$ score in both spaces. In this way, $R^2_{\H}$ measures the precision of the corpus approximation $\hat{\h}$ with respect to the true latent representation $\h$. Similarly, $R^2_{\Y}$ measures the precision of the corpus approximation $\hat{\y} =\l (\hat{\h})$ with respect to the true output $\y = \l(\h)$. Both of these metrics satisfy $-\infty < R^2 \leq 1$. A higher $R^2$ score is better with $R^2 = 1$ corresponding to a perfect approximation. All the metrics are computed over the test examples $\T$. The experiments are repeated 10 times to report standard deviations across different runs. 

\textbf{Baselines} We compare our method\footnote{To enforce SimplEx to select $K$ examples, we add a $L^1$ penalty making the $C - K$ smallest weights vanish.} (SimplEx) with 3 baselines.  A first approach, inspired by~\cite{Papernot2018}, consists in using the $K$-nearest corpus neighbours \emph{in latent space} to build the latent approximation $\hat{\h}$. Building on this idea, we introduce two baselines (1)~KNN Uniform that takes the average latent representation of the $K$-nearest corpus neighbours of $\h$ in latent space (2)~KNN Distance that computes the same average with weights $w^c$ inversely proportional to the distance $\parallel \h - \h^c \parallel_{\H}$. Finally, we use the representer theorem~\cite{Yeh2018} to produce an approximation $\hat{\y}$ of $\y$ with the corpus $\C$. Unlike the other methods, the representer theorem does not allow to produce an approximation in latent space. 

\textbf{Datasets} We use two different datasets with distinct tasks for our experiment: (1)~240,486 patients enrolled in the American SEER program~\cite{Seer2019}. We consider the binary classification task of predicting cancer mortality for patients with prostate cancer. We train a multilayer perceptron (MLP) for this task. Since this task is simple, we show that a corpus of $C=100$ patients yields good approximations. (2)~70,000 MNIST images of handwritten digits~\cite{Deng2012}. We consider the multiclass classification task of identifying the digit represented on each image. We train a convolutional neural network (CNN) for the image classification. This classification task is more complex than the previous one (higher $d_X$ and $d_Y$), we show that a corpus of $C=1,000$ images yields good approximations in this case.

\textbf{Results} The results for SimplEx and the KNN baselines are presented in Figure~\ref{fig:prostate_quality} \& \ref{fig:mnist_quality}. Several things can be deduced from these results: (1) It is generally harder to produce a good approximation in latent space than in output space as $R^2_{\H} < R^2_{\Y}$ for most examples (2) SimplEx produces the most accurate approximations, both in latent and output space. These approximations are of high quality with $R^2 \approx 1$. (3) The trends are qualitatively different between SimplEx and the other baselines. The accuracy of SimplEx increases with $K$ and stabilizes when a small number of corpus members contribute ($K=5$ in both cases). The accuracy of the KNN baselines increases with $K$, reaches a maximum for a small $K$ and steadily decreases for larger $K$. This can be understood easily: when $K$ increases beyond the number of relevant corpus examples, irrelevant examples will be added in the decomposition. SimplEx will typically annihilate the effect of these irrelevant examples by setting their weights $w^c$ to zero in the corpus decomposition. The KNN baselines include the irrelevant corpus members in the decomposition, which alters the quality of the approximation. This suggests that $K$ has to be tuned for each example with KNN baselines, while the optimal number of corpus examples to contribute is learned by SimplEx. (4) The standard deviations indicate that the performances of SimplEx are more consistent across different runs. This is particularly true in the prostate cancer experiment, where the corpus size $C$ is smaller. This suggests that SimplEx is more robust than the baselines. (5) For the representer theorem, we have $R^2_{\Y} = -(6.6 \pm 6.1)\cdot 10^7$ for the prostate cancer dataset and $R^2_{\Y} = -(7.2 \pm 6.6)$ for MNIST. This corresponds to poor estimations of the black-box output. We propose some hypotheses to explain this observation in Section~\ref{subsec:precision_sup} of the supplementary material.  

\subsection{Significance of Jacobian Projections} 

\textbf{Description} The purpose of this experiment is to check if SimplEx's Jacobian Projections are a good measure of the importance for each corpus feature in constructing the test latent representation $\h$.
In the same setting as in the previous experiment, we start with a corpus $\mathcal{C}$ of $C = 500$ MNIST images. We build a corpus approximation for an example $\x \in \X$ with latent representation $\h = \g(\x) \in \H$. The precision of this approximation is reflected by its corpus residual $r_{\C}(\h)$. For each corpus example $\x^c \in \C$, we would like to identify the features that are the most important in constructing the corpus decomposition of $\h$. With SimplEx, this is reflected by the Jacobian Projections $p^c_i$. We evaluate these scores for each feature $i \in [d_X]$ of each corpus example  $c \in [C]$. For each corpus image $\textbf{x}^c \in \mathcal{C}$, we select the $n$ most important pixels according to the Jacobian Projections and the baseline. In each case, we build a mask $\textbf{m}^c$ that replaces these $n$ most important pixels by black pixels. This yields a corrupted corpus image $\textbf{x}^c_{cor} = \textbf{m}^c \odot \textbf{x}^c$, where $\odot$ denotes the Hadamard product. By corrupting all the corpus images, we obtain a corrupted corpus $\C_{cor}$ . We analyse how well this corrupted corpus approximates $\h$, this yields a residual $r_{\C_{cor}}(\h)$. 

\textbf{Metric} We are interested in measuring the effectiveness of the corpus corruption. This is reflected by the metric $\delta_{cor}(\h) = r_{\C_{cor}}(\h) - r_{\C}(\h)$ . A higher value for this metric indicates that the features selected by the saliency method are more important for the corpus to produce a good approximation of $\textbf{h}$ in latent space. We repeat this experiment for 100 test examples and for different numbers  $n$ of perturbed pixels. 

\textbf{Baseline} As a baseline for our experiment, we use Integrated Gradients, which is close in spirit to our method. In a similar fashion, we compute the Integrated Gradients $IG^c_i$ for each feature $i \in [d_X]$ of each corpus example  $c \in [C]$ and construct a corrupted corpus based on these scores.

\textbf{Results} The results are presented in the form of box plots in Figure~\ref{fig:jacobian_corruption}. We observe that the corruptions induced by the Jacobian Projections are significantly more impactful when few pixels are perturbed. The two methods become equivalent when more pixels are perturbed. This demonstrates that Jacobian Projections are more suitable to measure the importance of features when performing a latent space reconstruction, as it is the case for SimplEx.

\begin{figure}[h]
	\centering
	\includegraphics[width=.5\textwidth]{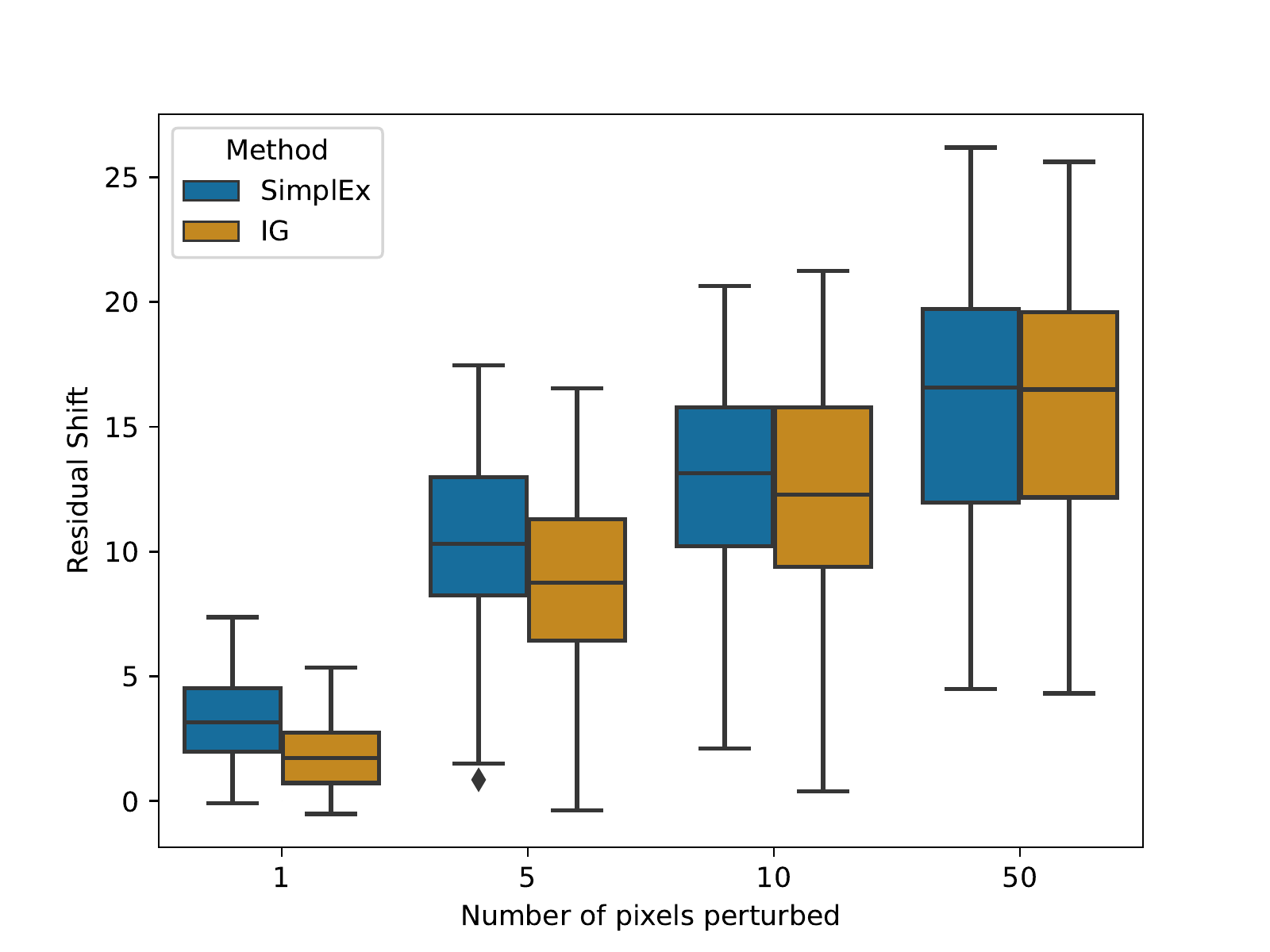}
	\caption{Increase in the corpus residual caused by each method (higher is better).}
	\label{fig:jacobian_corruption}
\end{figure}

\subsection{Use case: clinical risk model across countries} \label{subsec:clinical_use_case}

Very often, clinical risk models are produced and validated with the data of patients treated at a single site~\cite{Wu2021}. This can cause problems when these models are deployed at different sites for two reasons: (1)~Patients from different sites can have different characteristics (2)~Rules that are learned for one site might not be true for another site. One possible way to alleviate this problem would be to detect patients for which the model prediction is highly extrapolated and/or ambiguous. In this way, doctors from different sites can make an enlightened use of the risk model rather than blindly believing the model's predictions. We demonstrate that SimplEx provides a natural framework for this set-up.

As in the previous experiment, we consider a dataset $\Dusa$ containing patients enrolled in the American SEER program~\cite{Seer2019}. We train and validate an MLP risk model with $\Dusa$. To give a realistic realization of the above use-case, we assume that we want to deploy this risk model in a different site: the United Kingdom. For this purpose, we extract $\Duk$ from the set of 10,086 patients enrolled in the British Prostate Cancer UK program~\cite{Cutract2019}. These patients are characterized by the same features for both $\Duk$ and $\Dusa$. However, the datasets $\Duk$ and $\Dusa$ differ by a covariate shift: patients from $\Duk$ are in general older and at earlier clinical stages.  

When comparing the two populations in terms of the model, a first interesting question to ask is whether the covariate shift between $\Dusa$ and $\Duk$ affects the model representation. To explore this question, we take a first corpus of American patients $\Cusa~\subset~\Dusa$. If there is indeed a difference in terms of the latent representations, we expect the representations of test examples from $\Duk$ to be less closely approximated by their decomposition with respect to $\Cusa$. If this is true, the corpus residuals associated to examples of $\Duk$ will typically larger than the ones associated to $\Dusa$. To evaluate this quantitatively, we consider a mixed set of test examples $\T$ sampled from both $\Duk$ and $\Dusa$: $\T \subset \Duk \sqcup \Dusa$. We sample 100 examples from both sources: $\mid \T \cap \Duk \vert = \vert \T \cap \Dusa \vert = 100$. We then approximate the latent representation of each example $\h \in \g (\T)$ and compute the associated corpus residual $r_{\Cusa}(\h)$. We sort the test examples from $\T$ by decreasing order of corpus residual and we use this sorted list to see if we can detect the examples from $\Duk$. We use previous baselines for comparison, results are shown in Figure~\ref{fig:prostate_outlier}.

\begin{wrapfigure}{r}{.45\textwidth}
\vspace{-.5cm}
  \begin{center}
  \includegraphics[width=0.45\textwidth]{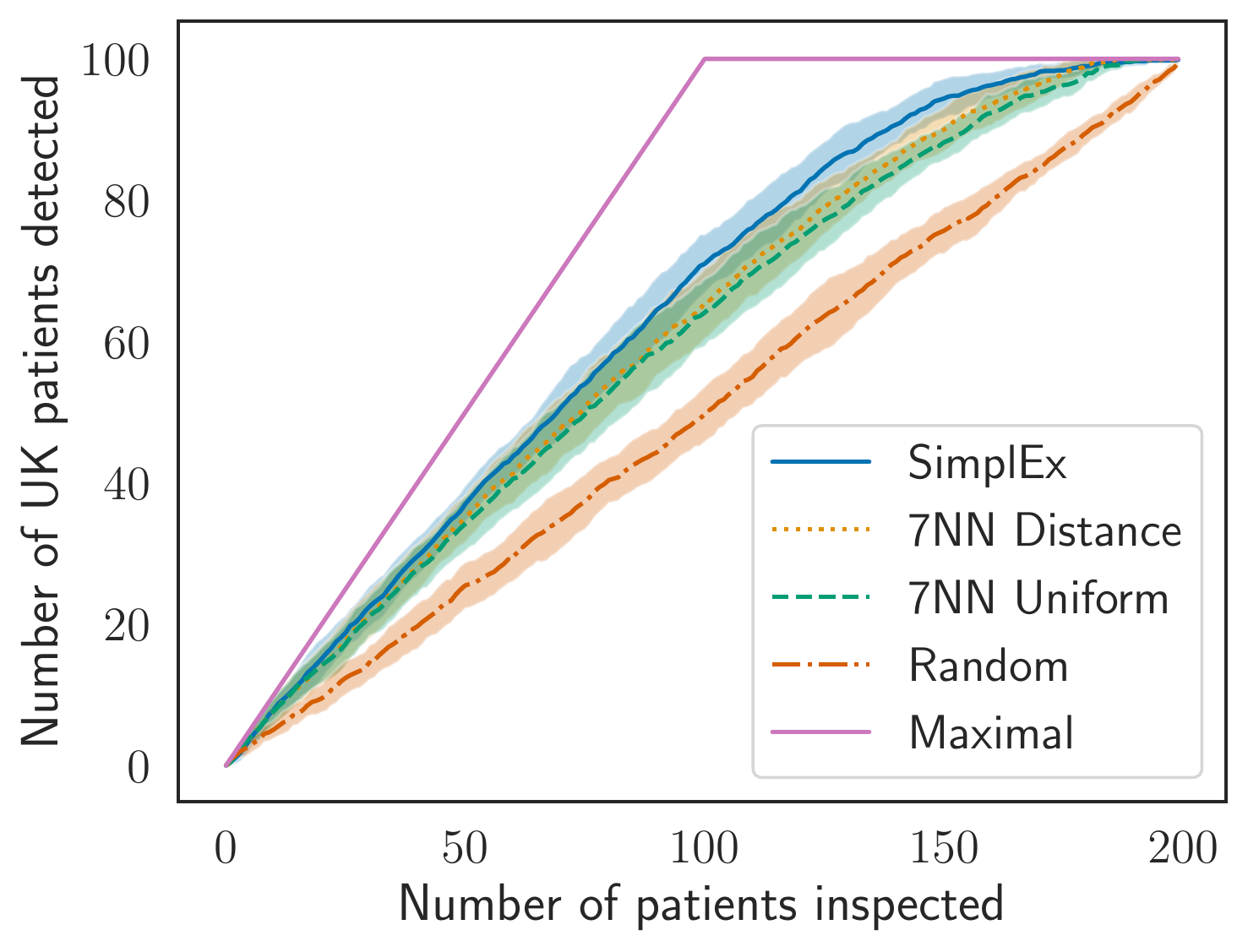}
  \end{center}
  \vspace{-0.3cm}
  \caption{Detecting UK patients (avg.$\pm$std.).}
  \label{fig:prostate_outlier}
  \vspace{-0.3cm}
\end{wrapfigure}
Several things can be deduced from this experiment. (1)~The results strongly suggest that the difference between the two datasets $\Dusa$ and $\Duk$ is reflected in their latent representations. (2)~The corpus residuals from SimplEx offer the most reliable way to detect examples that are different from the corpus examples $\Cusa$. None of the methods matches the maximal baseline since some examples of $\Dusa$ resemble examples from $\Duk$. (3)~When the corpus examples are representative of the training set, as it is the case in the experiment, our approach based on SimplEx provides a systematic way to detect test examples that have representations that are different from the ones produced at training time. A doctor should be more sceptical with respect to model predictions associated to larger residual with respect to $\Cusa$ as these arise from an extrapolation region of the latent space.

Let us now make the case more concrete. Suppose that an American and a British doctor use the above risk model to predict the outcome for their patients. Each doctor wants to decompose the predictions of the model in terms of patients they know. Hence, the American doctor selects a corpus of American patients $\Cusa \subset \Dusa$ and the British doctor selects a corpus of British patients $\Cuk \subset \Duk$. Both corpora have the same size $C_{\text{USA}} = C_{\text{UK}} = 1,000$. We suppose that the doctors know the model prediction and the true outcome for each patient in their corpus. Both doctors are sceptical about the risk model and want to use SimplEx to decide when it can be trusted. This leads them to a natural question: is it possible to anticipate misclassification with the help of SimplEx?

In Figure~\ref{fig:misclassified1} \& \ref{fig:misclassified2}, we provide two typical examples of misclassified British patients from $\Duk \setminus \Cuk$ together with their decomposition in terms of the two corpora $\Cusa$ and $\Cuk$. These two examples exhibit two qualitatively different situations. In Figure~\ref{fig:misclassified1}, both the American and the British doctors make the same observation: the model relates the test patient to corpus patients that are mostly misclassified by the model. With the help of SimplEx, both doctors will rightfully be sceptical with respect to the model's prediction.

In Figure~\ref{fig:misclassified2}, something even more interesting occurs: the two corpus decompositions suggest different conclusions. In the American doctor's perspective, the prediction for this patient appears perfectly coherent as all patients in the corpus decomposition have very similar features and all of them are rightfully classified. On the other hand, the British doctor will reach the opposite conclusion as the most relevant corpus patient is misclassified by the model. In this case, we have a perfect illustration of the limitation of the transfer of a risk model from one site (America) to another (United Kingdom): similar patients from different sites can have different outcomes. In both cases, since the test patient is British, only the decomposition in terms of $\Cuk$ really matters. In both cases, the British doctor could have anticipated the misclassification of each patient with SimplEx. 

\begin{figure}[h]
	\centering
	\includegraphics[width=.8\textwidth]{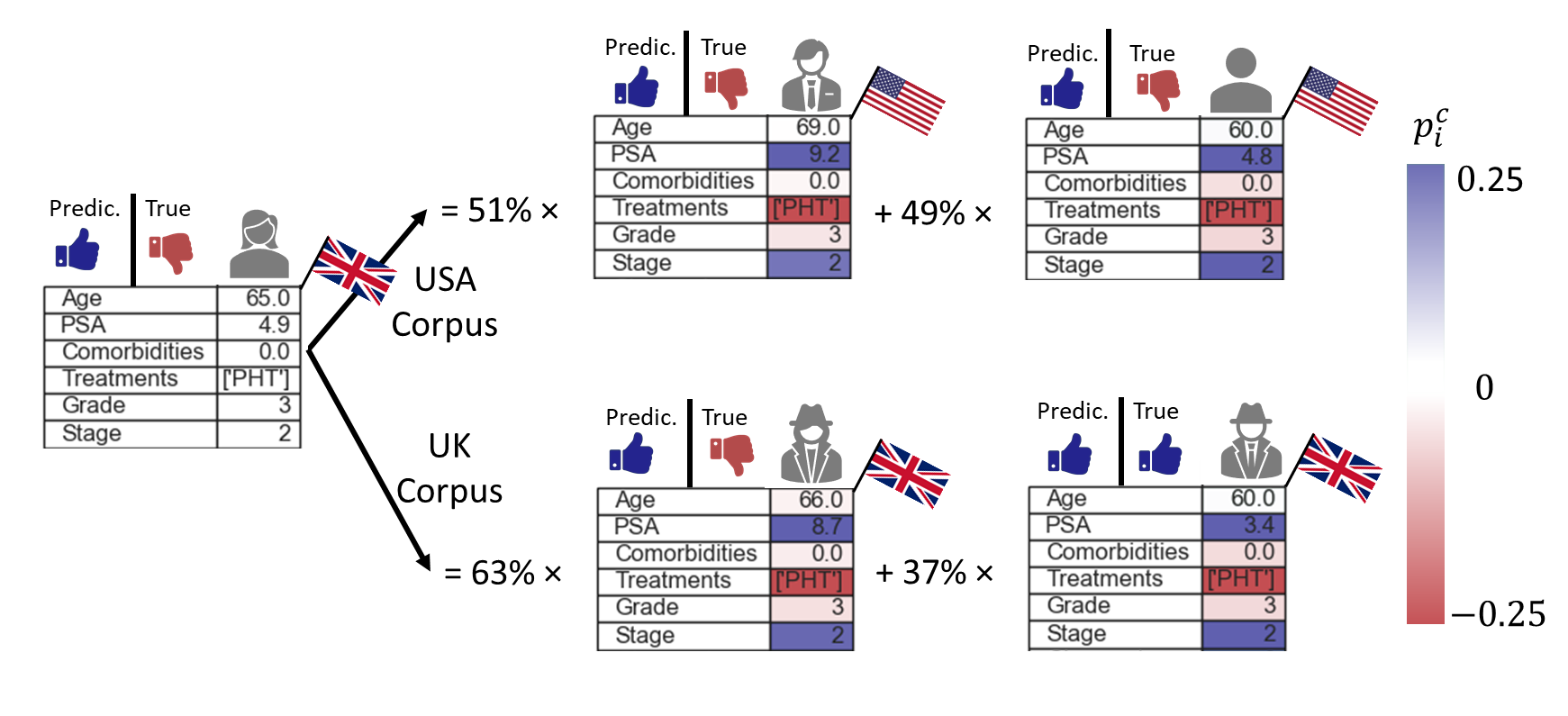}
	\caption{A first misclassified patient.}
	\vspace{-0.3cm}
	\label{fig:misclassified1}
\end{figure}

\begin{figure}[h]
  \centering
  \includegraphics[width=.9\textwidth]{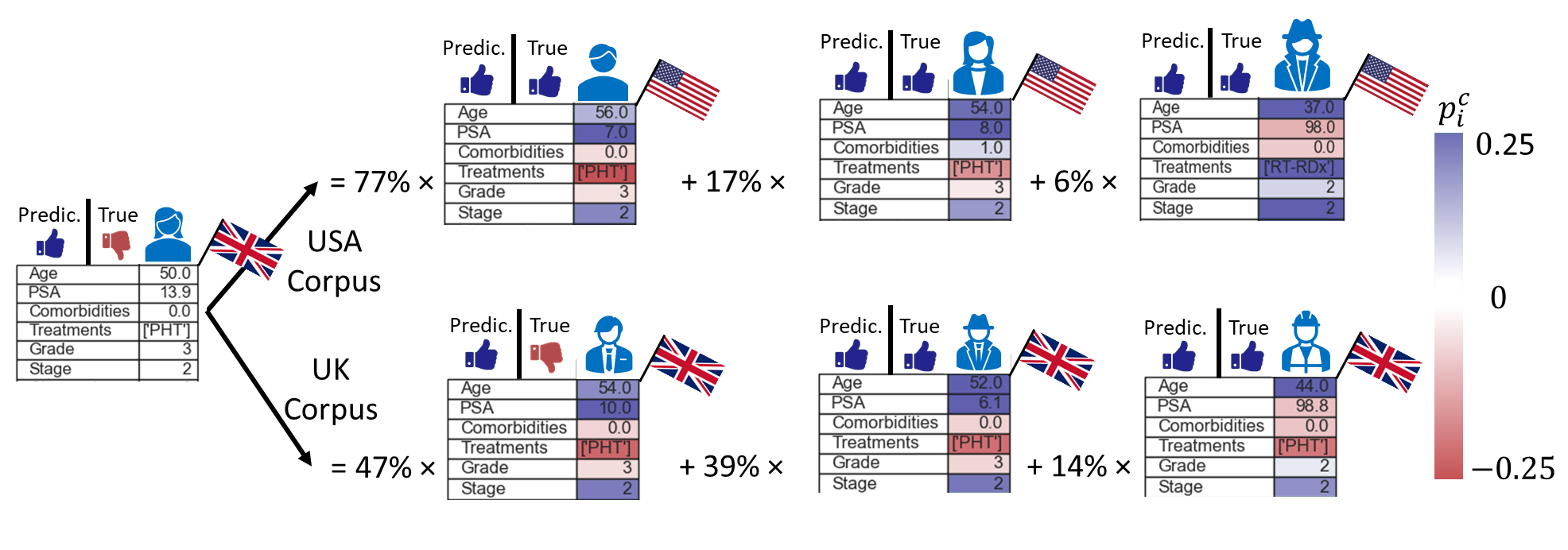}
  \caption{A second misclassified patient.}
  \label{fig:misclassified2}
\end{figure}

\section{Discussion}
We have introduced SimplEx, a method that decomposes the model representations at inference time in terms of a corpus. Through several experiments, we have demonstrated that these decompositions are accurate and can easily be personalized to the user. Finally, by introducing Integrated Jacobians, we have brought these explanations to the feature level. 
 
We believe that our bridge between feature and example-based explainability opens up many avenues for the future. A first interesting extension would be to investigate how SimplEx can be used to understand latent representations involved in unsupervised learning. For instance, SimplEx could be used to study the interpretability of \emph{self-expressive} latent representations learned by autoencoders~\cite{Ji2017}. A second interesting possibility would be to design a rigorous scheme to select the optimal corpus for a given model and dataset. Finally, a formulation where we allow the corpus to vary on the basis of observations would be particularly interesting for online learning.

\FloatBarrier
\newpage
\section*{Acknowledgement}
The authors are grateful to Alicia Curth, Krzysztof Kacprzyk, Boris van Breugel, Yao Zhang and the 4 anonymous NeurIPS 2021 reviewers for their useful comments on an earlier version of the manuscript. Jonathan Crabbé would like to thank Bogdan Cebere for replicating the experiments in this manuscript. Jonathan Crabbé would like to acknowledge Bilyana Tomova for many insightful discussions and her constant support. Jonathan Crabbé is funded by Aviva. Fergus Imrie is supported by by the National Science Foundation (NSF), grant number 1722516. Zhaozhi Qian and Mihaela van der Schaar are supported by the Office of Naval Research (ONR), NSF 1722516.  
\nocite{*}
\bibliographystyle{unsrt}
\bibliography{simplex}

\newpage
\appendix

\section{Supplement for Mathematical Formulation} \label{sec:problem_sup}
In this supplementary section, we give more details on the mathematical aspects underlying SimplEx.
\subsection{Precision of the corpus approximation in output space} \label{subsec:precision_output}

If the corpus representation of $\h \in \H$ has a residual $r_{\C}(\h)$, Assumption~\ref{assumption-linear} controls the error between the black-box prediction for the test example $\f(\x) = \l(\h)$ and and its corpus representation $\l (\hat{\h})$. 

\begin{proposition}[Precision in output space]
Consider a latent representation $\h$ with corpus residual $r_{\C}(\h)$. If Assumption~\ref{assumption-linear} holds, this implies that the corpus prediction $\l (\hat{\h})$ approximates $\l(\h)$ with a precision controlled by the corpus residual:
\begin{align*}
\parallel \l(\hat{\h}) - \l( \h) \parallel_{\Y} \hspace{0.5cm} \leq \hspace{0.5cm} \ \parallel \l \parallel_{\text{op}} \ \cdot \ r_{\C} (\h),
\end{align*} 
where $\parallel \cdot \parallel_{\Y}$ is a norm on $\Y$ and $\parallel \l \parallel_{\text{op}} = \inf \left\{ \lambda \in \R^+ : \ \parallel\l(\tilde{\h})\parallel_{\Y} \ \leq \ \lambda  \parallel \tilde{\h}\parallel_{\H} \ \forall \tilde{\h} \in \H \right\}$ is the usual operator norm.
\end{proposition}
\begin{proof}
The proof is immediate:
\begin{align*}
\norm{\l(\hat{\h}) - \l( \h)}{\Y} \hspace{.5cm} &= \hspace{.5cm} \parallel \l(\hat{\h} - \h) \parallel_{\Y}  \\
 &\leq \hspace{.5cm} \norm{\l}{\text{op}} \cdot \norm{\hat{\h} - \h}{\H} \\ 
 &= \hspace{.5cm} \norm{\l}{\text{op}} \cdot \ r_{\C}(\h),
\end{align*}
where we have successively used the linearity of $\l$, the definition of the operator norm $\norm{\cdot}{\text{op}}$ and Definition~\ref{def-corpus_residual}.
\end{proof}

\subsection{Uniqueness of corpus decomposition}

As we have mentioned in the main paper, the corpus decomposition is not always unique. To illustrate, we consider the following corpus representation: $\g( \C ) = \left\{ \h^1 , \h^2 , \h^3 = 0.5 \cdot \h^1 + 0.5 \cdot \h^2 \right\}$. Consider the following vector in the corpus hull: $\h = 0.75 \cdot \h^1 + 0.25 \cdot \h^2$. We note that this vector can also be written as $\h = 0.5 \cdot \h^1 + 0.5 \cdot \h^3 $. In other words, the vector $\h \in \g(\C)$ admits more than one corpus decomposition. This is not a surprise for the attentive reader: by paying a closer look to $\g(\C)$, we note that $\h^3$ is somewhat redundant as it is itself a combination of $\h^1$ and $\h^2$. The multiplicity of the corpus decomposition results from a redundancy in the corpus representation.

To make this reasoning more general, we need to revisit some classic concepts of convex analysis. To establish a sufficient condition that guarantees the uniqueness of corpus decompositions, we recall the definition of affine independence.

\begin{definition}[Affine independence]
The vectors $\{ \h^c \mid c \in [C] \} \subset \R^d$ are \emph{affinely independent} if
\begin{align*}
\sum_{c=1}^C \lambda^c \h^c = 0 \ \wedge \ \sum_{c=1}^C \lambda^c = 0 \ \Longrightarrow \lambda^c = 0 \ \forall \ c \in [C] 
\end{align*}
\end{definition}
If a set of vectors is not affinely independent, it means that one of the vectors can be written as an affine combination of the others. This is precisely what we called a redundancy in the previous paragraph. We now adapt a well-known result of convex analysis to our formalism:

\begin{proposition}[Uniqueness of corpus decomposition]
If the corpus representation $\g(\C) = \{ \h^c \mid c \in [C] \} $ is a set of affinely independent vectors, then every vector in the corpus hull $\h \in \CH(\C)$ admits one unique corpus decomposition.
\end{proposition}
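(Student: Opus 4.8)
The plan is to prove uniqueness by contradiction with the defining property of affine independence. Suppose some $\h \in \CH(\C)$ admits two corpus decompositions, say $\h = \sum_{c=1}^C w^c \h^c$ and $\h = \sum_{c=1}^C \tilde{w}^c \h^c$, where both weight families satisfy $w^c, \tilde{w}^c \in [0,1]$ and $\sum_{c=1}^C w^c = \sum_{c=1}^C \tilde{w}^c = 1$. The goal is to show that these two decompositions must coincide, i.e. $w^c = \tilde{w}^c$ for every $c \in [C]$.

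The key step is to subtract the two representations. Setting $\lambda^c \equiv w^c - \tilde{w}^c$, the equality $\sum_{c=1}^C w^c \h^c = \sum_{c=1}^C \tilde{w}^c \h^c$ immediately yields $\sum_{c=1}^C \lambda^c \h^c = 0$. At the same time, subtracting the two normalization constraints gives $\sum_{c=1}^C \lambda^c = \sum_{c=1}^C w^c - \sum_{c=1}^C \tilde{w}^c = 1 - 1 = 0$. Thus the coefficients $\{\lambda^c\}_{c=1}^C$ satisfy exactly the two hypotheses appearing in the definition of affine independence: a vanishing linear combination of the $\h^c$ together with a vanishing sum of coefficients.

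Invoking the affine independence of $\g(\C)$ then forces $\lambda^c = 0$ for all $c \in [C]$, which is precisely $w^c = \tilde{w}^c$ for every $c$. Hence the two decompositions are identical, and the corpus decomposition of $\h$ is unique. Since $\h \in \CH(\C)$ was arbitrary, the claim holds for every vector in the corpus hull.

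I do not anticipate a genuine obstacle here; this is the classical statement that points of a simplex have unique barycentric coordinates. The only point requiring care is the algebraic bookkeeping of forming the difference of the two convex combinations and checking that \emph{both} conditions of affine independence (the linear relation \emph{and} the coefficient-sum constraint) are met. Everything else follows directly from the definition, so no continuity, compactness, or topological argument is needed.
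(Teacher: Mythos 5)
Your proof is correct and follows essentially the same route as the paper's: both subtract the two convex combinations, set $\lambda^c = w^c - \tilde{w}^c$, verify that $\sum_{c=1}^C \lambda^c \h^c = 0$ and $\sum_{c=1}^C \lambda^c = 0$, and invoke affine independence to force $\lambda^c = 0$ for all $c \in [C]$. The only cosmetic difference is that you phrase the argument directly (the decompositions must coincide) while the paper frames it as a contradiction with distinctness; the underlying algebra is identical.
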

\begin{proof}
The existence of a decomposition is a trivial consequence of the definition of $\CH(\C)$. We prove the uniqueness of the decomposition by contradiction. Let us assume that a vector $\h \in \CH(\C)$ admits two distinct corpus decompositions:
\begin{align*}
\h = \sum_{c=1}^C w^c \h^c = \sum_{c=1}^C \tilde{w}^c \h^c \\
\end{align*}
where $w^c , \tilde{w}^c \geq 0$ for all $c \in [C]$, $\sum_{c=1}^C w^c = \sum_{c=1}^C \tilde{w}^c = 1$ and $(w^c)_{c=1}^C \neq (\tilde{w}^c)_{c=1}^C$. It follows that:
\begin{align*}
\sum_{c=1}^C \underbrace{(w^c - \tilde{w}^c )}_{\equiv \lambda^c} \h^c = 0
\end{align*}
But $\sum_{c=1}^C \lambda^c = \sum_{c=1}^C (w^c - \tilde{w}^c ) = 1 - 1 = 0$. It follows that $\g(\C)$ is not affinely independent, a contradiction.
\end{proof}
This shows that affine independence provides a sufficient condition to ensure the uniqueness of corpus decompositions. If one wants to produce such a corpus, a possibility is to gradually add new examples in the corpus by checking that the latent representation of each new example is not an affine combination of the previous latent representations. Clearly, the number of examples in such a corpus cannot exceed $d_H + 1$.

\subsection{Integrated Jacobian and Integrated Gradients} \label{subsec:integrated_gradient}

Integrated Gradients is a notorious method used to discuss feature saliency~\cite{Sundararajan2017}. It uses a black-box output to attribute a saliency score to each feature. In the original paper, the output space $\Y$ is assumed to be one-dimensional: $d_Y = 1$. We shall therefore relax the bold notation that we have used for the outputs so far. In this way, the black-box is denoted $f$ and the latent-to-output map is denoted $l$. Although the original paper makes no mention of corpus decompositions, it is straightforward to adapt the definition of Integrated Gradients to our set-up:
\begin{definition}[Integrated Gradient] The \emph{Integrated Gradient} between a baseline $\x^0$ an a corpus example $\x^c \in \X$ associated to feature $i \in [d_X]$ is
\begin{align*}
\IG_i^c = \left( x^c_i - x^0_i \right) \int_0^1 \frac{\partial f}{\partial x_i} \bigg|_{\linec(t)} \ dt \hspace{0.5cm} \in \R,
\end{align*}
where $\linec (t) \equiv \x^0 + t \cdot \left( \x^c - \x^0 \right)$ for $t \in [0,1]$. 
\end{definition}
In the main paper, we have introduced Integrated Jacobians: a latent space generalization of Integrated Gradients. We use the word generalization for a reason: the Integrated Gradient can be deduced from the Integrated Jacobian but not the opposite\footnote{Unless in the degenerate case where $d_H = 1$. However, this case is of little interest as it describes a situation where $\Y$ and $\H$ are isomorphic, hence the distinction between output and latent space is fictional.}. We make the relationship between the two quantities explicit in the following proposition.
\begin{proposition}
The Integrated Gradient can be deduced from the Integrated Jacobian via
\begin{align*}
\IG^c_i = l \left( \j^c_i \right).
\end{align*}
\end{proposition}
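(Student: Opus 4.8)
The plan is to prove the identity $\IG^c_i = l(\j^c_i)$ by a direct computation that exploits the linearity of the latent-to-output map $l$. Recall that Assumption~\ref{assumption-linear} tells us $f = l \circ g$, so the partial derivative of the black-box output decomposes via the chain rule. The key observation is that $l$ is linear (it acts as $l(\h) = A\h$ for a fixed matrix, with the scalar-output specialization $d_Y = 1$ here), which means it commutes with both differentiation and integration. This is exactly the structural feature that makes the identity hold.

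First I would write out the scalar Integrated Gradient using $f = l \circ g$ and apply the chain rule to the integrand:
\begin{align*}
\frac{\partial f}{\partial x_i}\bigg|_{\linec(t)} = \frac{\partial (l \circ g)}{\partial x_i}\bigg|_{\linec(t)} = l\left( \frac{\partial g}{\partial x_i}\bigg|_{\linec(t)} \right),
\end{align*}
where the last equality uses the linearity of $l$ to pull it outside the componentwise derivative (since $l$ has constant Jacobian, differentiating $l(g(\cdot))$ amounts to applying $l$ to the derivative of $g$). Next I would substitute this into the definition of $\IG^c_i$ and push $l$ through the integral and the scalar prefactor $(x^c_i - x^0_i)$:
\begin{align*}
\IG^c_i = (x^c_i - x^0_i) \int_0^1 l\left( \frac{\partial g}{\partial x_i}\bigg|_{\linec(t)} \right) dt = l\left( (x^c_i - x^0_i) \int_0^1 \frac{\partial g}{\partial x_i}\bigg|_{\linec(t)} \, dt \right) = l(\j^c_i),
\end{align*}
where the middle step again uses linearity of $l$ to commute it with the integral and the scalar multiplication, and the final equality is just the definition of the Integrated Jacobian $\j^c_i$.

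The main obstacle, such as it is, lies in justifying the interchange of the linear map $l$ with the integral, i.e. that $\int_0^1 l(\,\cdot\,)\,dt = l(\int_0^1 \cdot \, dt)$. For a finite-dimensional linear (or affine) map this is elementary: $l$ is bounded and continuous, so it commutes with the Bochner/componentwise integral whenever the integrand is integrable, which is guaranteed by the almost-everywhere differentiability assumption on $g$ that was already invoked to define $\j^c_i$. I would also note for completeness that the chain-rule step presumes differentiability of $g$ along $\linec$, but since the integrated quantities are defined precisely under the standing assumption that $g$ is almost everywhere differentiable, the identity holds as an equality of integrals without further hypotheses. No other subtlety arises, so the proof is genuinely a one-line consequence of linearity once the definitions are unfolded.
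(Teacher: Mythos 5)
Your proof is correct and follows essentially the same route as the paper's: apply the chain rule with the linearity of $l$ to get $\partderiv{f}{x_i} = l\bigl(\partderiv{\g}{x_i}\bigr)$, then commute $l$ with the integral and the scalar prefactor $(x^c_i - x^0_i)$ to recover the definition of $\j^c_i$. Your extra remark justifying the interchange of $l$ with the integral (boundedness of a finite-dimensional linear map, integrability from almost-everywhere differentiability) is a welcome touch of rigor the paper leaves implicit, but it does not change the argument.
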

\begin{proof}
We start from the definition of the Integrated Gradient:
\begin{align*}
\IG_i^c \hspace{.5cm} &= \hspace{.5cm} \left( x^c_i - x^0_i \right) \int_0^1 \frac{\partial f}{\partial x_i} \bigg|_{\linec(t)} \ dt \\
&= \hspace{.5cm} \left( x^c_i - x^0_i \right) \int_0^1 \partderiv{(l \circ \g)}{x_i} \bigg|_{\linec(t)} \ dt \\
&= \hspace{.5cm} \left( x^c_i - x^0_i \right) \int_0^1 l \left( \partderiv{\g}{x_i} \bigg|_{\linec(t)} \right) \ dt \\
&= \hspace{.5cm} \left( x^c_i - x^0_i \right) l \left( \int_0^1  \partderiv{\g}{x_i} \bigg|_{\linec(t)}  \ dt \right) \\
&= \hspace{.5cm} l \left( \left( x^c_i - x^0_i \right)  \int_0^1  \partderiv{\g}{x_i} \bigg|_{\linec(t)}  \ dt \right) \\
&= \hspace{.5cm} l \left( \j^c_i \right),
\end{align*}
where we have successively used: Assumption~\ref{assumption-linear}, the linearity of the partial derivative, the linearity of the integration operator, the linearity of $l$ and the definition of Integrated Jacobians.
\end{proof}
Note that Integrated Jacobians allow us to push our understanding of the black-box beyond the output. There is very little reason to expect a one dimensional output to capture the model complexity. As we have argued in the introduction, our paper pursues the more challenging ambition of gaining a deeper understanding of the black-box latent space. 

\subsection{Properties of Integrated Jacobians} \label{subsec:proof_prop}
We give a proof for the proposition appearing in the main paper.
\begin{manualproposition}{2.1}[Properties of Integrated Jacobians]
Consider a baseline $(\x^0, \h^0 = \g(\x^0))$ and a test example together with their latent representation $(\x, \h = \g(\x)) \in \X \times \H$. If the shift $\h - \h^0$ admits a decomposition~\eqref{equ-shift_corpus_decomposition}, the following properties hold.\\
\begin{align*}
(A):~\sum_{c=1}^C \sum_{i=1}^{d_X} w^c \textbf{j}^c_i = \h - \h^0 \hspace{1cm} (B):~\sum_{c=1}^C \sum_{i=1}^{d_X} w^c p^c_i = 1.
\end{align*} 
\end{manualproposition}
\begin{proof}
Let us begin by proving (A). By using the chain rule for a given corpus example $c \in [C]$, we write explicitly the derivative of the curve $\curvec$ with respect to its parameter $t \in (0,1)$:
\begin{align*}
\deriv{\left( \curvec \right)}{t}\bigg|_{t} \hspace{.5cm}  &= \hspace{.5cm} \sum_{i=1}^{d_X} \partderiv{\g}{x_i}\bigg|_{\linec(t)} \deriv{\gamma^c_i}{t}\bigg|_{t} \\
&= \hspace{.5cm} \sum_{i=1}^{d_X} \partderiv{\g}{x_i}\bigg|_{\linec(t)} (x^c_i - x^0_i),
\end{align*}
where we used $\gamma^c_i(t) = x^0_i + t \cdot (x^c_i - x^0_i)$ to obtain the second equality. We use this equation to rewrite the sum of the Integrated Jacobians for this corpus example $c$:
\begin{align*}
\sum_{i=1}^{d_X} \j^c_i \hspace{.5cm} &= \hspace{.5cm} \sum_{i=1}^{d_X} \int_0^1 \partderiv{\g}{x_i}\bigg|_{\linec(t)} (x^c_i - x^0_i) \ dt \\
&= \hspace{.5cm} \int_0^1 \sum_{i=1}^{d_X}  \partderiv{\g}{x_i}\bigg|_{\linec(t)} (x^c_i - x^0_i) \ dt \\
&= \hspace{.5cm} \int_0^1 \deriv{\left( \curvec \right)}{t}\bigg|_{t} \ dt \\
&= \hspace{.5cm} \curvec(1) - \curvec(0) \\
&= \hspace{.5cm} \h^c - \h^0, \\
\end{align*}
where we have successively used: the linearity of integration, the explicit expression for the curve derivative, the fundamental theorem of calculus and the definition of the curve $\curvec$. We are now ready to derive (A):
\begin{align*}
\sum_{c=1}^C \sum_{i=1}^{d_X} w^c \textbf{j}^c_i \hspace{.5cm} &= \hspace{.5cm} \sum_{c=1}^C w^c \left( \h^c - \h^0 \right)\\
&= \hspace{.5cm} \h - \h^0,
\end{align*}
where we have successively used the exact expression for the sum of Integrated Jacobians associated to corpus example $c$ and the definition of the corpus decomposition of $\h$. We are done with (A), let us now prove (B). We simply project both members of (A) on the overall shift $\h - \h^0$. Projecting the left-hand side of (A) yields:
\begin{align*}
\proj_{\h - \h^0} \left( \sum_{c=1}^C \sum_{i=1}^{d_X} w^c \textbf{j}^c_i \right) \hspace{.5cm} &= \hspace{.5cm} \sum_{c=1}^C \sum_{i=1}^{d_X} w^c \underbrace{\proj_{\h - \h^0}  \left(  \textbf{j}^c_i \right)}_{p^c_i},
\end{align*} 
where we used the linearity of the projection operator. Projecting the right-hand side of (A) yields:
\begin{align*}
\proj_{\h - \h^0} (\h - \h^0)  = \frac{\langle \ \h - \h^0 \ , \ \h - \h^0 \ \rangle}{\langle \ \h - \h^0 \ , \ \h - \h^0 \ \rangle} = 1.
\end{align*}
By equating the projected version of both members of (A), we deduce (B).
\end{proof}

\subsection{Pseudocode for SimplEx} \label{subsec:pseudocode}

We give the pseudocode for the two modules underlying SimplEx: the corpus decomposition (Algorithm~\ref{algo:corpus_decomposition}) and the evaluation of projected Jacobians (Algorithm~\ref{algo:projected_jacobian}).

\begin{algorithm}[H] \label{algo:corpus_decomposition}
\SetAlgoLined
\KwIn{Test latent representation $\h$ ; Corpus representation $\left\{ \h^c \mid c \in [C] \right\}$}
\KwResult{Weights of the corpus decomposition $\w \in [0,1]^C$ ; Corpus residual $r_{\C}(\h)$}
Initialize pre-weights: $\tilde{\w} \leftarrow \boldsymbol{0} $\;
\While{optimizing}{
 Normalize pre-weights: $\w \leftarrow \textbf{softmax}\left[ \tilde{\w} \right]$\;
 Evaluate loss: $L \left[ \tilde{\w} \right] \leftarrow \sum_{i=1}^{d_H} \left( h_i - \sum_{c=1}^C w^c h^c_i \right)^2$ \;
 Update pre-weights: $\tilde{\w} \leftarrow \textbf{Adam step} \left( L \left[ \tilde{\w} \right] \right)$\;
 }
 Return normalized weights: $\w \leftarrow \textbf{softmax}\left[ \tilde{\w} \right]$\;
 Return corpus residual: $r_{\C}(\h) \leftarrow  \left[ \sum_{i=1}^{d_H} \left( h_i - \sum_{c=1}^C w^c h^c_i \right)^2 \right]^{1/2}$\;
\caption{SimplEx: Corpus Decomposition}
\end{algorithm}

Where we used a vector notation for the pre-weights and the weights: $\w = \left(w^c \right)_{c=1}^C$.
For the Adam optimizer, we use the default hyperparameters in the Pytorch implementation: $\text{lr} = 10^{-3}$ ; $\beta_1 = .9$ ; $\beta_2 = .999$ ; $\text{eps} = 10^{-8} $. Note that this algorithm is a standard optimization loop for a convex problem where the normalization of the weights is ensured by using a softmax.

When the size of the corpus elements to contribute has to be limited to $K$, we use a similar strategy as the one used to produce extremal perturbations~\cite{Fong2019, Crabbe2021}. This consists in adding the following $L^1$ term to the optimized loss $L$:
\begin{align*}
L_{\text{reg}} \left[ \tilde{\w} \right] = \sum_{d=1}^{C-K} \left|  \text{vecsort}^d \left[ \tilde{\w}  \right] \right|,
\end{align*}
where vecsort is a permutation operator that sorts the components of a vector in ascending order.
The notation $\text{vecsort}^d$ refers to the $d^{th}$ component of the sorted vector.
This regularization term will impose sparsity for the $C-K$ smallest weights of the corpus decomposition. As a result, the optimal corpus decomposition only involves $K$ non-vanishing weights. We now focus on the evaluation of the Projected Jacobian.

\begin{algorithm}[H] \label{algo:projected_jacobian}
\SetAlgoLined
\KwIn{Test input $\x$ ; Test representation $\h$ ; Corpus $\left\{ \x^c \mid c \in [C] \right\}$ ; Corpus representation $\left\{ \h^c \mid c \in [C] \right\}$ ; Baseline input $\x^0$ ; Baseline representation $\h^0$ ; Black-box latent map $\g$ ; Number of bins $N_b \in \N^*$}
\KwResult{Jacobian projections $\P = \left( p^c_i \right) \in \R^{C \times d_X}$}
Initialize the projection matrix: $\P = \left( p^c_i \right) \leftarrow \boldsymbol{0}$ \;
Form a matrix of corpus inputs: $\textbf{X}^C \leftarrow (x^c_i) \in \R^{C \times d_X} $\;
Form a matrix of baseline inputs: $\textbf{X}^0 \leftarrow (x^0_i) \in \R^{C \times d_X} $\;
\For{$n \in [N_b]$}{
Set the evaluation input: $\tilde{\textbf{X}} \leftarrow \textbf{X}^0 + \frac{n}{N_b} \left( \textbf{X}^C - \textbf{X}^0 \right)$ \;
Increment the Jacobian projections: $p^c_i \leftarrow p^c_i + \partderiv{\g}{x_i} \big|_{\tilde{\x}^c} \cdot \frac{\h - \h^0}{\norm{\h - \h^0}{2}^2} \hspace{.5cm} \forall (c,i) \ \in [C] \times [d_X] $ \;
}
Apply the appropriate pre-factor: $\P \leftarrow \frac{1}{N_b} \left( \textbf{X}^C - \textbf{X}^0 \right) \odot \P $ \; 
\caption{SimplEx: Projected Jacobian}
\end{algorithm}

This algorithm approximates the integral involved in the definition of the Projected Jacobian with a standard Riemann sum. Note that the definition of $\textbf{X}^0$ implies that the baseline vector $\x^0$ is broadcasted along the first dimension of the matrix. More explicitly, the components of this matrix are $X^0_{c,i} = x^0_i$ for $c \in [C]$ and $i \in [d_X]$. Also note that the projected Jacobians can be computed in parallel with packages such as Pytorch's autograd. We have used the notation $\odot$ to denote the conventional Hadarmard product. In our implementation, the number of bins $N_b$ is fixed at $200$, bigger $N_b$ don't significantly improve the precision of the Riemann sum.  

\subsection{Choice of a baseline for Integrated Jacobians}
Throughout our analysis of the corpus decomposition, we have assumed the existence of a baseline $\x_0 \in \X$. This baseline is crucial as it defines the starting point of the line $\linec$ that we use to compute the Jacobian quantities. What is a good choice for the baseline? The answer to this question depends on the domain. When this makes sense, we choose the baseline to be an instance that does not contain any information. A good example of this is the baseline that we use for MNIST: an image that is completely black $\x_0 = 0$. Sometimes, this absence of information is not well-defined. A good example is the prostate cancer experiment: it makes little sense to define a patient whose features contain no information. In this set-up, our baseline is a patient whose features are fixed to their average value in the training~\footnote{This average could also be computed with respect to the corpus itself.} set  $\x_0 = \vert \Dtrain \vert^{-1} \sum_{\x \in \Dtrain} \x$. In this way, a shift with respect to the baseline corresponds to a patient whose features differ from the population average.

\section{Supplement for Experiments}  \label{sec:experiments_sup}
In this section, we give more details on the experiments that we have conducted with SimplEx.
All our experiments have been performed on a machine with Intel(R) Core(TM) i5-8600K CPU @~3.60GHz [6 cores] and Nvidia GeForce RTX 2080 Ti GPU. Our implementation is done with Pytorch 1.8.1.

\subsection{Details for the corpus precision experiment} \label{subsec:precision_sup}

\textbf{Metrics} We give the explicit expression for the two metrics used in the experiment. By keeping the notation of Section~\ref{sec:experiment}, we assume that we are given a set of test samples $\T \subset \X$. For each test representation $\h \in \g (\T)$ and test output $\y \in \f (\T)$, we build build an approximation $\hat{\h}$ and $\hat{\y}$ with several methods. To evaluate the quality of these approximations, we use the following $R^2$ scores:
\begin{align*}
R^2_{\H} &= 1 - \frac{\sum_{\h \in \g(\T)}  \norm{\h - \hat{\h}}{2}^2}{\sum_{\h \in \g(\T)} \norm{\h - \bar{\h}}{2}^2 }  & \bar{\h} = \frac{1}{|\T|} \sum_{\h \in \g(\T)} \h \\ 
R^2_{\Y} &= 1 - \frac{\sum_{\y \in \f(\T)}  \norm{\y - \hat{\y}}{2}^2}{\sum_{\y \in \f(\T)} \norm{\y - \bar{\y}}{2}^2 } & \bar{\y} = \frac{1}{|\T|} \sum_{\y \in \f(\T)} \y .
\end{align*}
These $R^2$ scores compare the approximation method to a dummy approximator that approximates every representation and output with their test average. A negative value for $R^2$ indicates that the approximation method performs more poorly than the dummy approximator. Ideally, the $R^2$ score should be close to $1$.

\textbf{SEER Dataset} The SEER dataset is a private dataset consisting in 240,486 patients enrolled in the American SEER program~\cite{Seer2019}. All the patients from the SEER dataset have been de-identified. We consider the binary classification task of predicting cancer mortality for patients with prostate cancer. Each patient in the dataset is represented by the couple $(\x , \z)$, where $\x$ contains the patient features and $\z \in \{ 0 , 1 \}^2$ is a vector indicating the patient mortality. The features characterizing each patient are their age, PSA, Gleason score, clinical stage and which, if any, treatment they are receiving. These features are summarized in Table~\ref{tab:seer_features}. The original dataset is severely imbalanced as $93.8\%$ patients survive or have a mortality unrelated to cancer. We extract a balanced subset of 42,000 patients that we split into a training set $\Dtrain$ of 35,700 patients and a test set $\Dtest$ of 6,300 patients. We train a multilayer perceptron (MLP) for the mortality prediction task on $\Dtrain$. 

\begin{table}
\begin{tabularx}{\textwidth}{c X}
\toprule
Feature & Range \\ 
\midrule
Age & $60-73$ \\ 
PSA & $5-11$ \\ 
Comorbidities & $0, 1, 2, \geq 3$ \\ 
Treatment & Hormone Therapy (PHT), Radical Therapy - RDx (RT-RDx), \newline Radical Therapy -Sx (RT-Sx), CM \\ 
Grade & $1, 2, 3, 4, 5$ \\ 
Stage & $1, 2, 3, 4$ \\ 
Primary Gleason & $1, 2, 3, 4, 5$ \\ 
Secondary Gleason & $1, 2, 3, 4, 5$ \\ 
\bottomrule
\end{tabularx} 
\vspace{.5cm}
\caption{Features for the SEER Dataset.}
\label{tab:seer_features}
\end{table}

\textbf{MNIST Dataset} MNIST is a public dataset consisting in 70,000 MNIST images of handwritten digits~\cite{Deng2012}. We consider the multiclass classification task of identifying the digit represented on each image. Each instance in the dataset is represented by the couple $(\x , \z)$, where $\x$ contains the image itself and $\z \in \{ 0 , 1 \}^{10}$ is a vector indicating the true label for the image. The images are characterized by $28 \times 28$ pixels with one channel. The dataset is conventionally split into a training set $\Dtrain$ of 60,000 images and a test set $\Dtest$ of 10,000 images. We train a convolutional neural network (CNN) for the image classification task on $\Dtrain$. Yann LeCun and Corinna Cortes hold the copyright of MNIST dataset, which is a derivative work from original NIST datasets. MNIST dataset is made available under the terms of the Creative Commons Attribution-Share Alike 3.0 license.

\textbf{Prostate Cancer Model} The model that we use for mortality prediction on the SEER dataset is a simple MLP with two hidden layers. Its precise architecture is described in Table~\ref{tab:mortality_mlp}. The model is trained by minimizing the following loss:
\begin{align} \label{eq:classification_train_loss}
\mathcal{L}_{\text{train}}\left( \boldsymbol{\theta} \right) = \sum_{\left( \x , \z \right) \in \Dtrain } - \z \odot \textbf{log} \left[ \f_{\boldsymbol{\theta}} (\x) \right] + \lambda \cdot \norm{\boldsymbol{\theta}}{2},
\end{align}
where we have introduced a $L^2$ regularization, as required by the representer theorem. The regularization coefficient is chosen to be $\lambda = 10^{-5}$ (a bigger $\lambda$ significantly decreases the performance of the model on the testing set). We train this model with Adam (default Pytorch hyperparameters) for 5 epochs. Across the different runs, the accuracy of the resulting model on the test set ranges between $85-86 \%$. 
\begin{table}
\begin{tabularx}{\textwidth}{c c c c X}
\toprule
Layer & Input Dimension & Output Dimension & Activation & Remark \\ 
\midrule
Batch Norm & 3 & 3 &  & Only acts on Age, PSA and Comorbidities \\
Dense 1 & 26 & 200 & ReLU &  \\
Dropout & 200 & 200 &  &  \\
Dense 2 & 200 & 50 & ReLU &  \\
Dropout & 50 & 50 &  & Output: $\h = \g(\x)$  \\
Linear & 50 & 2 &  & Output: $\y = \l(\h) $  \\
Softmax & 2 & 2 &  & Output: $\p = \f(\x)$  \\
\bottomrule
\end{tabularx} 
\vspace{.5cm}
\caption{Mortality Prediction MLP for SEER.}
\label{tab:mortality_mlp}
\end{table}

\textbf{MNIST Model} The model that we use for image classification on the MNIST dataset is the CNN represented in Figure~\ref{fig:cnn_architecture}. Its precise architecture is described in Table~\ref{tab:mnist_cnn}. The model is trained by minimizing the loss \eqref{eq:classification_train_loss} with $\lambda = 10^{-1}$. We train this model with Adam (default Pytorch hyperparameters) for 10 epochs. Across the different runs, the accuracy of the resulting model on the test set ranges between $94-96 \%$ (note that the weight decay decreases the performances, training the same model with $\lambda = 0$ yields a test accuracy above $99 \%$).

\begin{figure}
  \begin{center}
  \includegraphics[width=\textwidth]{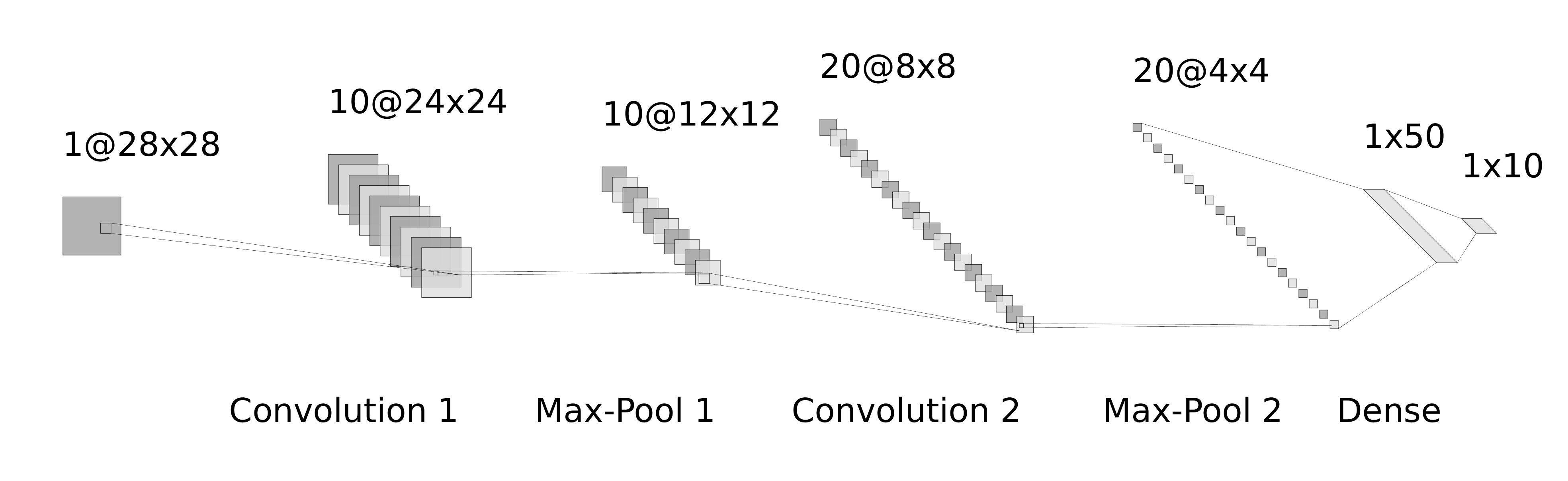}
  \end{center}
  \caption{Architecture of MNIST classifier.}
  \label{fig:cnn_architecture}
\end{figure}

\begin{table}
\begin{tabularx}{\textwidth}{c c c c X}
\toprule
Layer & Input Dimension & Output Dimension & Activation & Remark \\ 
\midrule
Convolution 1 & $28 \times 28 \times 1$ & $24 \times 24 \times 10$ & & Kernel Size: $5$ \\
Max-Pool 1 & $24 \times 24 \times 10$ & $12 \times 12 \times 10$ & ReLU & Kernel Size: $2$ \\
Convolution 2 & $12 \times 12 \times 10$ & $8 \times 8 \times 20$ & & Kernel Size: $5$\\
Dropout & $8 \times 8 \times 20$ & $8 \times 8 \times 20$ & &  \\
Max-Pool 2 & $8 \times 8 \times 20$ & $4 \times 4 \times 20$ & ReLU & Kernel Size: $2$ \\
Flatten & $4 \times 4 \times 20$ & $320$ &  & \\
Dense & $320$ & $50$ & ReLU  & \\
Dropout & $50$ & $50$ &   & Output: $\h = \g(\x)$ \\
Linear & $50$ & $10$ &   & Output: $\y = \l(\h) $ \\
Softmax & $10$ & $10$ &  & Output: $\p = \f(\x)$  \\
\bottomrule
\end{tabularx} 
\vspace{.5cm}
\caption{MNIST Classifier CNN.}
\label{tab:mnist_cnn}
\end{table}

\textbf{Representer theorem} Previous works established that the pre-activation output of classification deep-networks can be decomposed in terms of contributions arising from the training set~\cite{Yeh2018}. In our set-up, where the neural network takes the form $\f = \boldsymbol{\phi} \circ \l \circ \g$, the decomposition can be written as
\begin{align*}
\y &= \l \circ \g (\x) \\
& = - \frac{1}{2 \lambda | \Dtrain |} \sum_{(\x', \z') \in \Dtrain} \partderiv{\mathcal{L}_{\text{train}}}{\left[ \l (\x') \right]} \\
& =  \frac{1}{2 \lambda | \Dtrain |} \sum_{(\x', \z') \in \Dtrain} \left[ \z' - \f(\x') \right] \cdot \left[ \g (\x ')\right]^{\top} \left[ \g (\x) \right],
\end{align*}
where $\lambda$ is the $L^2$ regularization coefficient used in the training loss~\eqref{eq:classification_train_loss}. In our work, we decompose the same output in terms of a corpus $\C$ that is distinct from the training set $\Dtrain$. In this experiment, the corpus is a random subset of the training set $\C \subset \Dtrain$ (in our implementation of the representer theorem, the true labels associated to the corpus example are included). To give a corpus approximation of the output with the representer theorem, we restrict the above sum to the corpus:  
\begin{align*}
\hat{\y} =  \frac{1}{2 \lambda | \C |} \sum_{(\x', \z') \in \C} \left[ \z' - \f(\x') \right] \cdot \left[ \g (\x ')\right]^{\top} \left[ \g (\x) \right].
\end{align*}  
The $R^2_{\Y}$ score reported in the main paper measure the quality of this approximation. It turns out that $R^2_{\Y} < 0$ in both experiments, which indicates that the representer theorem offers poor approximations. We have two explanations: (1)~As previously mentioned, the representer theorem assumes that the decomposition involves the \emph{whole} training set. By making a decomposition that involves a subset $\C$ of the training set, we violate a first assumption of the representer theorem. (2)~The representer theorem assumes that the trained model $\f_{\boldsymbol{\theta}^*}$ corresponds to a stationnary point of the loss: $\nabla_{\boldsymbol{\theta}} \mathcal{L}_{\text{train}} \mid_{\boldsymbol{\theta}^*} = 0$. This assumption is rarely verified in non-convex optimization problems such as the optimization of deep networks.

\textbf{Plots with an alternative metric} In Figures~\ref{fig:prostate_norm} \& \ref{fig:mnist_norm}, we report the norm of the error associated to each method as a function of the number of active corpus members $K$. With this metric, SimplEx remains the most interesting approximation method in latent and output space. Note that for SimplEx, when $K=C$, the error in latent space is equivalent to the corpus residual $r_{\C}(\h)$.

\begin{figure}
\begin{center}
  \begin{subfigure}{.4\textwidth}
  \centering
  \includegraphics[width=\linewidth]{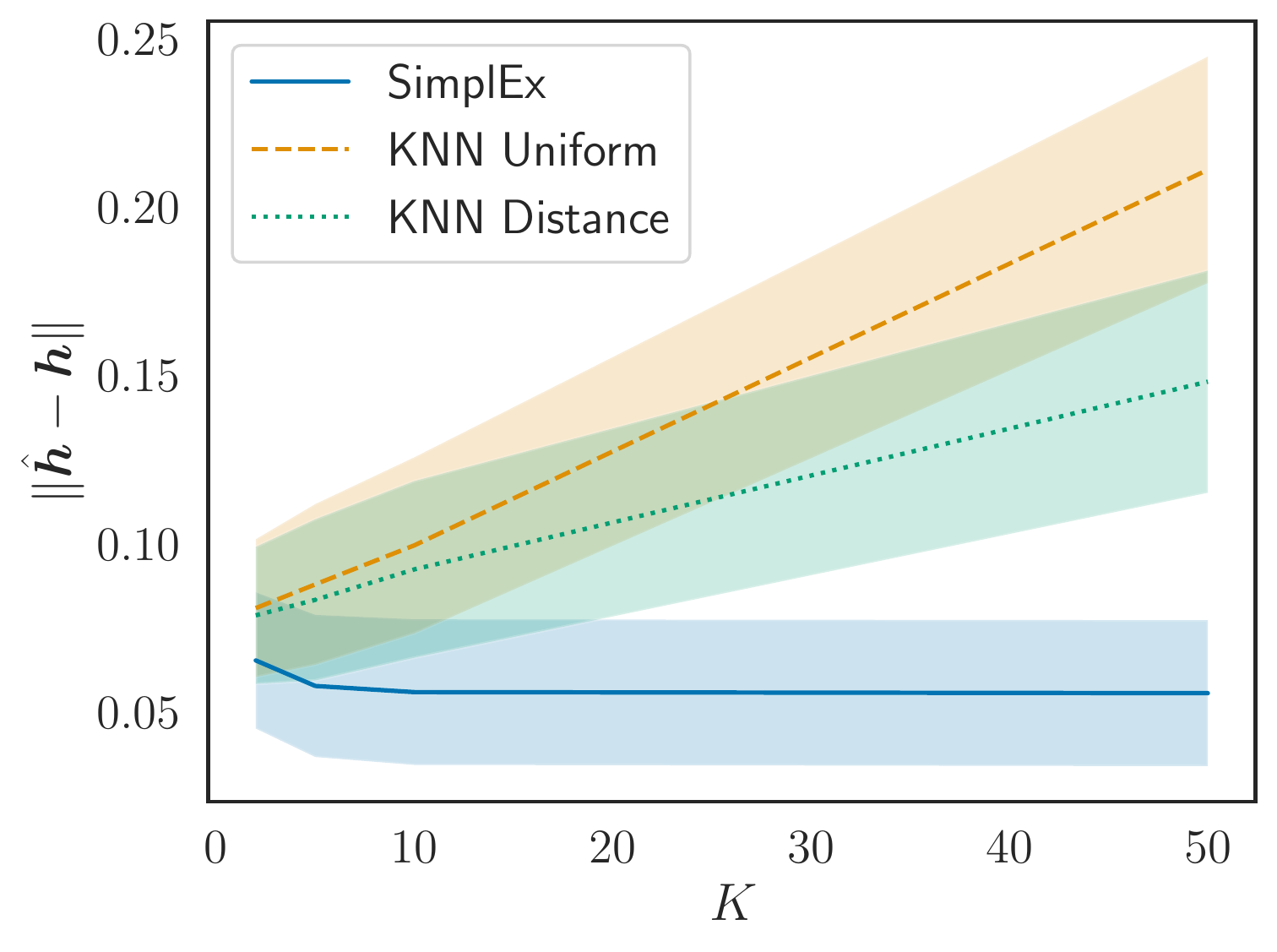}
  \caption{Norm of the latent error}
\end{subfigure}%
\begin{subfigure}{.4\textwidth}
  \centering
  \includegraphics[width=\linewidth]{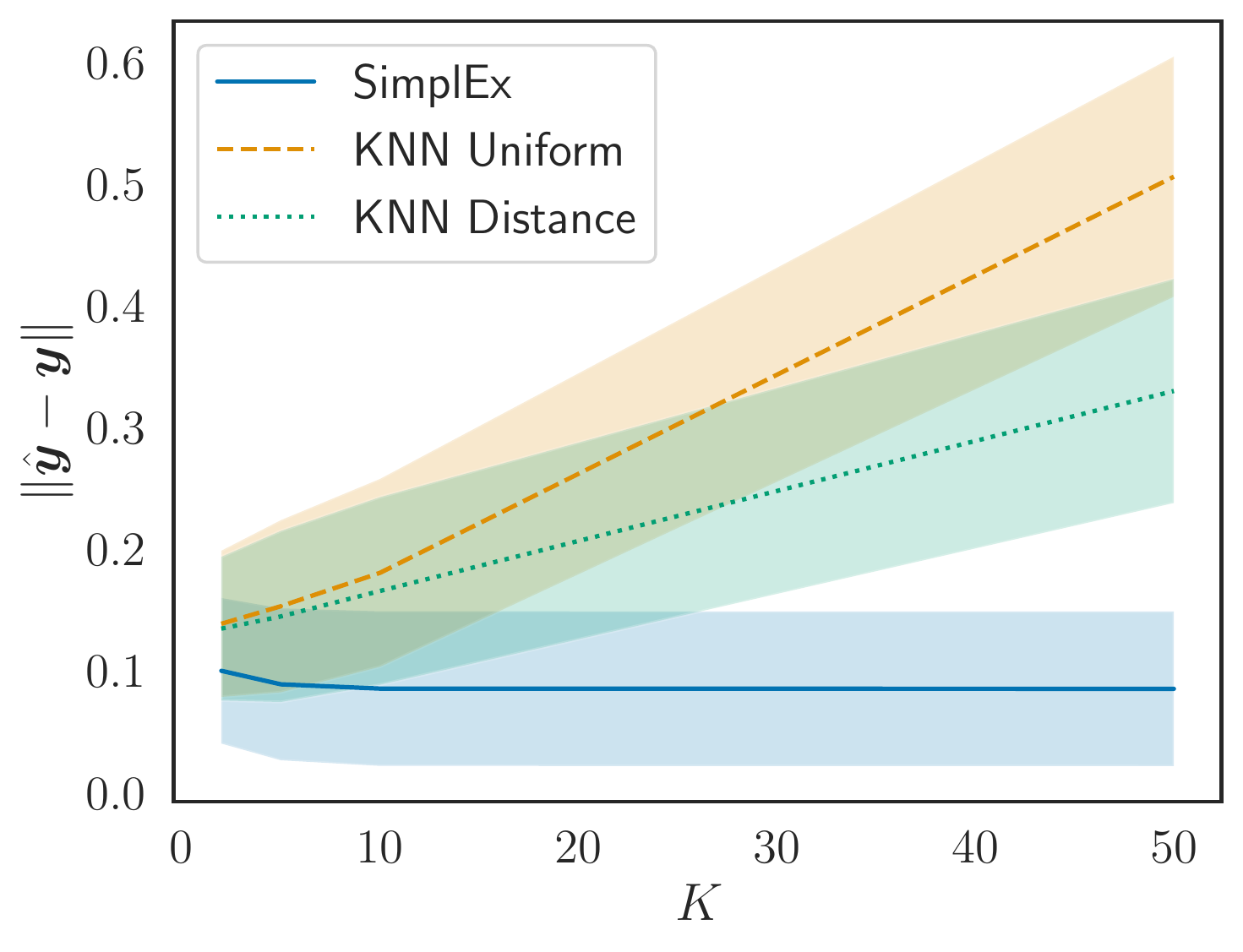}
  \caption{Norm of the output error}
\end{subfigure}
\end{center}
\caption{Precision of corpus decomposition for prostate cancer (avg $\pm$ std).}
\label{fig:prostate_norm}
\end{figure}

\begin{figure}
\begin{center}
  \begin{subfigure}{.4\textwidth}
  \centering
  \includegraphics[width=\linewidth]{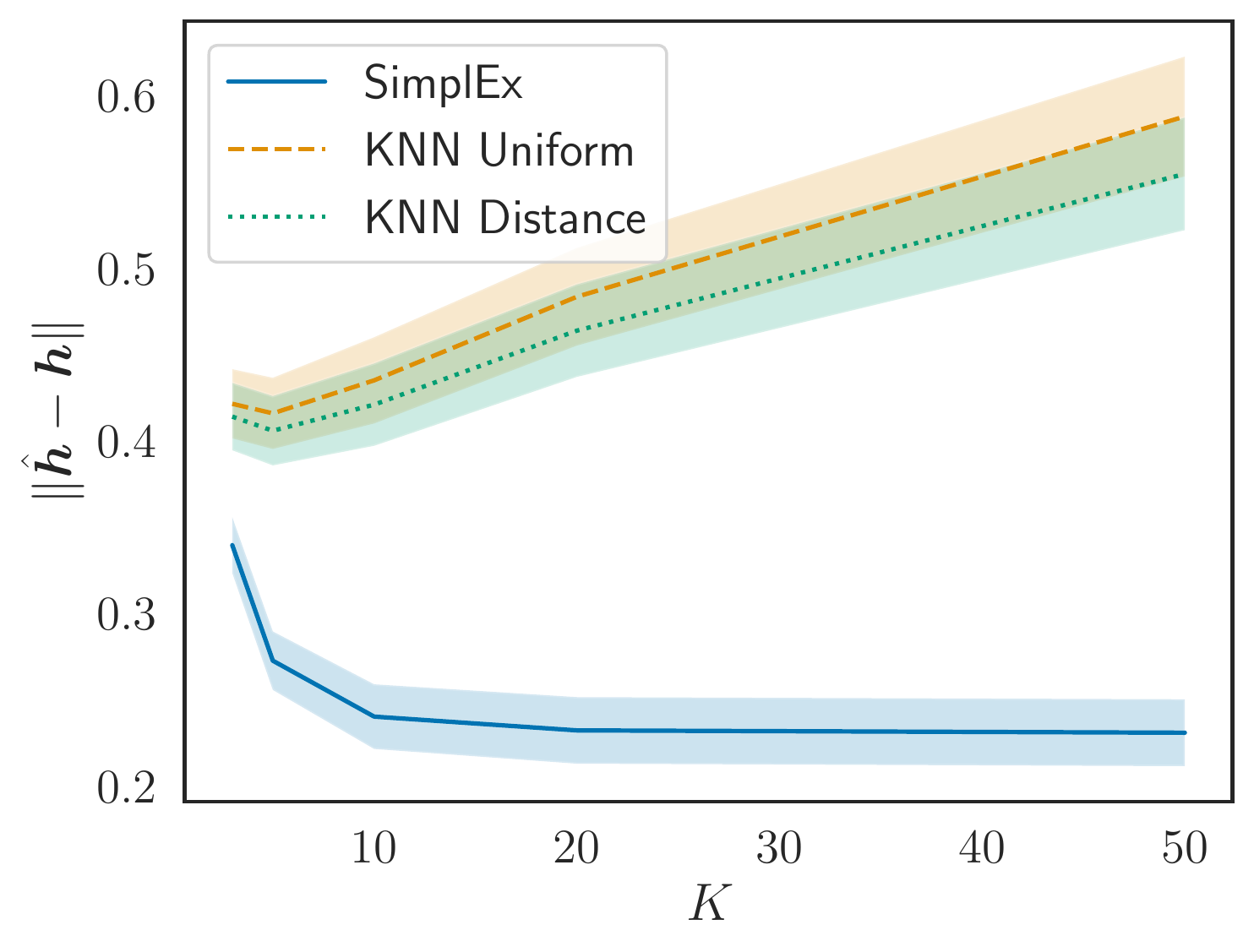}
  \caption{Norm of the latent error}
\end{subfigure}%
\begin{subfigure}{.4\textwidth}
  \centering
  \includegraphics[width=\linewidth]{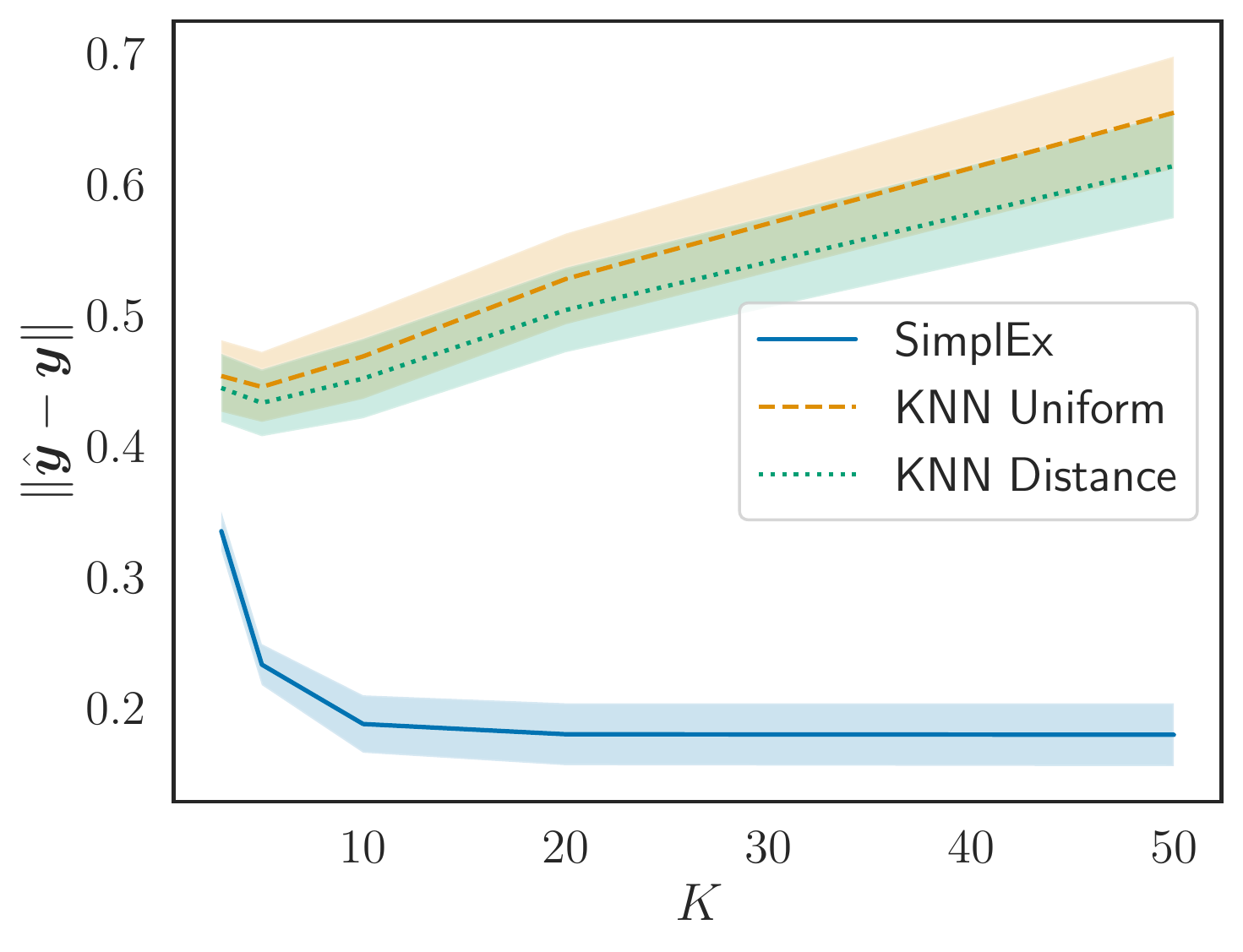}
  \caption{Norm of the output error}
\end{subfigure}
\end{center}
\caption{Precision of corpus decomposition for MNIST (avg $\pm$ std).}
\label{fig:mnist_norm}
\end{figure}

\textbf{On the measure of consistency} In our experiments, we use the standard deviation of each metric across different runs to study the consistency of the corpus approximations. Table~\ref{tab:reset_procedure} details the parts of the experiment that are modified from one run to another. 

\begin{table}
\begin{center}
\begin{tabularx}{.6\textwidth}{X c c c}
\toprule
Modified on each run & \multicolumn{3}{c}{Dataset}\\
       & Prost. Cancer & MNIST & AR \\
\midrule
Data   &  &  & \checkmark \\
Train-Test Split  &\checkmark  &  & \checkmark \\
Model   & \checkmark  & \checkmark  & \checkmark \\
Corpus   & \checkmark  & \checkmark  & \checkmark \\

\bottomrule
\end{tabularx}
\end{center} 
\vspace{.5cm}
\caption{Characterization of a run for each dataset.}
\label{tab:reset_procedure}
\end{table}

\subsection{Details for the clinical use case} \label{subsec:clinical_use_case_sup}

\textbf{Description} In this subsection, we detail the detection of British patients that is described in Section~\ref{subsec:clinical_use_case} of the main paper.

\textbf{Metrics} Each method produces an ordered list $(\x^m)_{m=1}^{\vert \T \vert}$ of elements of $\T$. We inspect its elements in order and count the number of examples that are in $\T \cap \Duk$. This count is represented by the sequence $(u_n)_{n=1}^{\vert \T \vert}$ where $u_n = \vert (\x^m)_{m=1}^{n} \bigcap  \Duk \vert$. A sequence $(u_n)_{n=1}^{\vert \T \vert}$ that increases more quickly is better as it corresponds to a more efficient detection of the British patients.  The maximal baseline corresponds to the upper bound. The experiments are repeated 10 times to report the average metrics together with their standard deviations. 

\textbf{Baselines} We compare SimplEx with 5 indicative baselines. Each method produces an ordered list $(\x^m)_{m=1}^{\vert \T \vert}$ of elements of $\T$. In the case of SimplEx, this list is produced by sorting the examples in decreasing order of corpus residual. We use the two Nearest Neighbours baselines from the previous examples by fixing $K$ to the value that produced the best approximations ($K = 7$). For both of these baselines, we sort the examples in decreasing order of residual $\| \h - \hat{\h}\|$. We consider the random baseline where the order of the list is chosen randomly. Finally, we introduce the ideal baseline that detects all outliers with $\nicefrac{\vert \T \vert}{2} = 100$ inspections: $(\x^m)_{m=1}^{\nicefrac{\vert \T \vert}{2}} = \T \cap \Dout$.

\textbf{CUTRACT Dataset} The CUTRACT dataset is a private dataset consisting in 10,086 patients enrolled in the British Prostate Cancer UK program~\cite{Cutract2019}. All the patients from the CUTRACT dataset have been de-identified. We consider the binary classification task of predicting cancer mortality for patients with prostate cancer. Each patient in the dataset is represented by the couple $(\x , \z)$, where $\x$ contains the patient features and $\z \in \{ 0 , 1 \}^2$ is a vector indicating the patient mortality. The features characterizing the patient are the same as for the SEER dataset. These features are summarized in Table~\ref{tab:cutract_features}. Once again, the full dataset is unbalanced, we then choose $\Duk$ as a balanced subset of 2,000 patients. This dataset is private.

\begin{table}
\begin{tabularx}{\textwidth}{c X}
\toprule
Feature & Range \\ 
\midrule
Age & $64-76$ \\ 
PSA & $8-21$ \\ 
Comorbidities & $0, 1, 2, \geq 3$ \\ 
Treatment & Hormone Therapy (PHT), Radical Therapy - RDx (RT-RDx), \newline Radical Therapy -Sx (RT-Sx), CM \\ 
Grade & $1, 2, 3, 4, 5$ \\ 
Stage & $1, 2, 3, 4$ \\ 
Primary Gleason & $1, 2, 3, 4, 5$ \\ 
Secondary Gleason & $1, 2, 3, 4, 5$ \\ 
\bottomrule
\end{tabularx} 
\vspace{.5cm}
\caption{Features for the CUTRACT Dataset.}
\label{tab:cutract_features}
\end{table}

\textbf{Model} We use the same mortality predictor as in the previous experiment (see Table~\ref{tab:mortality_mlp}). The only difference is that no weight decay is included in the optimization ($\lambda = 0$).

\textbf{Note on the prostate cancer datasets} In the medical literature on prostate cancer~\cite{Gordetsky2016}, the grade of a patient can be deduced from the Gleason scores in the following way:

\begin{align*}
	\text{Gleason}1 +\text{Gleason}2\leq 6\Longrightarrow \text{Grade}=1 \\
	\text{Gleason}1=3\wedge \text{Gleason}2=4 \Longrightarrow \text{Grade} = 2 \\
	\text{Gleason}1=4\wedge \text{Gleason}2=3 \Longrightarrow \text{Grade}=3 \\
	\text{Gleason}1+\text{Gleason}2=8 \Longrightarrow \text{Grade}=4 \\
	\text{Gleason}1+\text{Gleason}2 \geq 9 \Longrightarrow \text{Grade}=5
\end{align*}

We noted that this relationship between the grade and the Gleason score was not always verified among the patients in our two prostate cancer datasets. After discussing with our curator, we understood that the data of some patients was collected by using a different convention. Further, some of the data was missing and has been imputed in a way that does not respect the above rule. Clearly, those details are irrelevant if we use this data to train a model to illustrate the functionalities of SimplEx as it is done in our paper. Nonetheless, it should be stressed that this inconsistency with the medical literature implies that our models are only illustrative and should not be used in a medical context. To avoid any confusion, we have removed the Gleason scores from the Figures in the main paper. For completeness, we have included the Gleason scores in Figures \ref{fig:prostate_further_examples1},\ref{fig:prostate_further_examples2},\ref{fig:user_study}. As we can observe, not all the patients verify the above rule. 

\subsection{Detection of EMNIST letters} \label{subsec:emnist_detection}

\textbf{Description} We propose an analogue of the detection of British patient from Section~\ref{subsec:clinical_use_case} in the image classification setting. We train a CNN with a training set extracted from the MNIST dataset $\Dtrain \subset \Dmnist$. Next, we sample a corpus $\C \subset \Dtrain$ of size $C = 1,000$. We are now interested in investigating if the latent representation of test examples from another similar dataset can be distinguished from latent representations of MNIST examples. To that aim, we use the EMNIST-Letter dataset $\Demnist$, which contains images that are similar to MNIST images. There is one major difference between the two datasets: EMNIST-Letter images represent letters, while MNIST images represent numbers. To evaluate quantitatively if this difference matters for the model representation, we consider a mixed set of test examples $\T$ sampled from both $\Dmnist$ and $\Demnist$: $\T \subset \Dmnist \sqcup \Demnist$. We sample 100 examples from both sources: $\mid \T \cap \Dmnist \vert = \vert \T \cap \Demnist \vert = 100$. For the rest, we follow the same procedure as in the clinical use-case: we approximate the latent representation of each example $\h \in \g (\T)$, compute the associated corpus residual $r_{\C}(\h)$ and sort the examples by decreasing order of residual. 

\textbf{Metrics}  We use the same metrics as in Section~\ref{subsec:clinical_use_case_sup}. 

\textbf{Baselines} We use the same baselines as in Section~\ref{subsec:clinical_use_case_sup}.

\textbf{EMNIST-Letter dataset} EMNIST-Letter contains 145,600 images, each representing a handwritten letter~\cite{Cohen2017}. These images have exactly the same format as MNIST images: $28 \times 28$ pixels with one channel. Ryan Cooper holds the copyright of EMNIST dataset, which is a derivative work from original MNIST datasets. EMNIST dataset is made available under the terms of the MIT license.

\textbf{Model} As in the previous experiment, we train the CNN from Table~\ref{tab:mnist_cnn}. In contrast with the previous experiment, we set the weight decay to zero: $\lambda = 0$. The resulting model has more than $99 \%$ accuracy on the test set.

\textbf{Results} Results are shown in Figure~\ref{fig:mnist_outlier}. This suggests that the difference between the two dataset is encoded in their latent representation. SimplEx and the baselines offer similar performances in this case.

\begin{figure}
  \begin{center}
  \includegraphics[width=.6\textwidth]{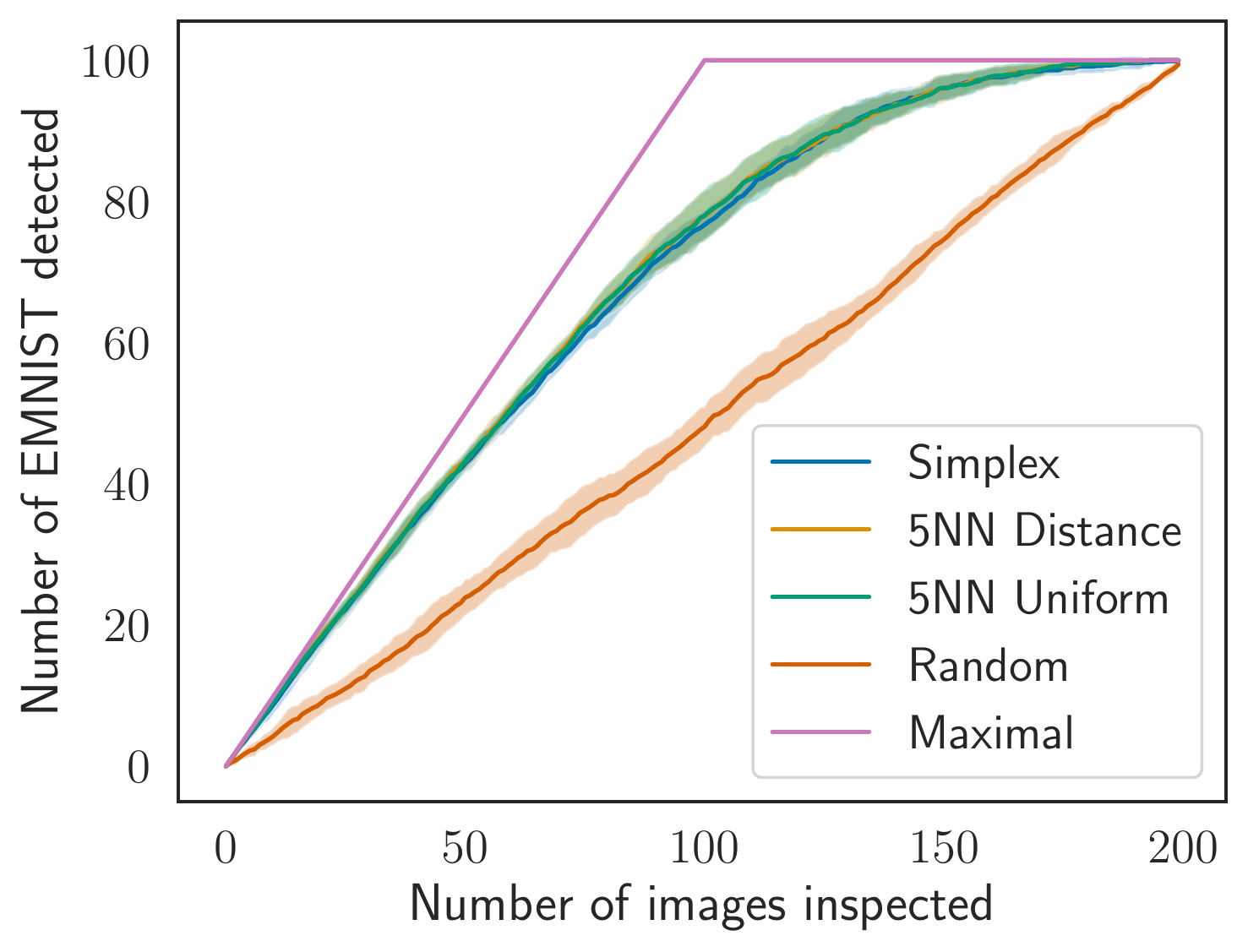}
  \end{center}
  \caption{Detection of EMNIST examples.}
  \label{fig:mnist_outlier}
\end{figure}

\subsection{Experiments with synthetic time series}
In this section, we describe some further experiments we have performed with synthetic time series. The two parts of this subsection mirror the experiments that we have performed in Section~\ref{sec:experiment} of the main paper.
\subsubsection{Corpus precision}
\textbf{Description} This experiment mirrors the experiments from Section~\ref{subsec:precision_experiment} of the main paper. We start with a time series dataset $\D$ that we split into a training set $\Dtrain$ and a testing set $\Dtest$. We train a black-box $f$ for a time series forecasting task on the training set $\Dtrain$. We randomly sample  a set of corpus examples from the training set $\C \subset \Dtrain$ (we omit the true labels for the corpus examples) and a set of test examples from the testing set $\T \subset \Dtest$. For each test example $\x \in \T$, we build an approximation $\hat{\h}$ for $\h = \g (\x)$ with the corpus examples latent representations. In each case, we let the method use only $K$ corpus examples to build the approximation. We repeat the experiment for several values of $K$.    

\textbf{Metrics}  We use the same metrics as in Section~\ref{subsec:precision_experiment} of the main paper. We run the experiment 5 times to report standard deviations across different runs. 

\textbf{Baselines} We use the same baselines as in Section~\ref{subsec:precision_experiment} of the main paper. 

\textbf{Data generation} We generate data from the following AR(2) generating process.
\begin{align} \label{equ:original_ar}
x_t = \varphi_1 \cdot x_{t-1} + \varphi_2 \cdot x_{t-2} + \epsilon_t \hspace{.5cm} \forall t \in [3:T+1],
\end{align}
where $\varphi_1 = .7$, $\varphi_2 = .25$ and $\epsilon_t \sim \mathcal{N}(0, 0.1)$. The initial condition for the time series are sampled independently: $x_1, x_2 \sim \mathcal{N}(0,1)$. Each instance in the dataset $\D$ consists in a couple $(\x , \y) \in \D$ of sequences $\x = (x_t)_{t=1}^T$ and $\y = (y_t)_{t=1}^T$. For each time step, the target indicates the value of the time series at the next step: $y_t = x_{t+1}$ for all $t \in [1:T]$. We generate 10,000 such instances that we split into 9,000 training instances $\Dtrain$ and 1,000 testing instances $\Dtest$.

\textbf{Model} We train a two layer LSTM to forecast the next value of the time series at each time step. The precise model architecture is described in Table~\ref{tab:ar_lstm}. The model is trained by minimizing the following loss:
\begin{align*} 
\mathcal{L}_{\text{train}}\left( \boldsymbol{\theta} \right) = \sum_{\left( \x , \y \right) \in \Dtrain } \sum_{t=1}^T \left[ f_{\boldsymbol{\theta}} (\x_{1:t}) - y_t \right]^2,
\end{align*}
where $\x_{1:t} \equiv (x_t)_{t=1}^T$. We train this model with Adam (default Pytorch hyperparameters) for 20 epochs. Across the different runs, the average RMSE of the resulting model on testing data is always $0.1$. This corresponds to ideal performances for a deterministic model due to the noise term $\epsilon_t$ in the AR model.

\begin{table}
\begin{tabularx}{\textwidth}{c c c c X}
\toprule
Layer & Input Dimension & Output Dimension & Activation & Remark \\ 
\midrule
LSTM 1 & $t \times 1$ &  $t \times 100$ &  & \\
LSTM 2 & $t \times 100$ &  $100$ &   &  Output: $\h_t = \g(\x_{1:t})$ \\
Linear & $100$ & $1$ &   & Output: $y_t = f(\x_{1:t}) $ \\
\bottomrule
\end{tabularx} 
\vspace{.5cm}
\caption{AR Forecasting LSTM, $t$ denotes the length of the input sequence. }
\label{tab:ar_lstm}
\end{table}

\textbf{Results} The results of this experiment are shown in Figure~\ref{fig:ar_r2} \& \ref{fig:ar_norm}. As in Section~\ref{subsec:precision_experiment} of the main paper, SimplEx offers significantly better and more consistent results across different runs.

\begin{figure}
\begin{center}
  \begin{subfigure}{.4\textwidth}
  \centering
  \includegraphics[width=\linewidth]{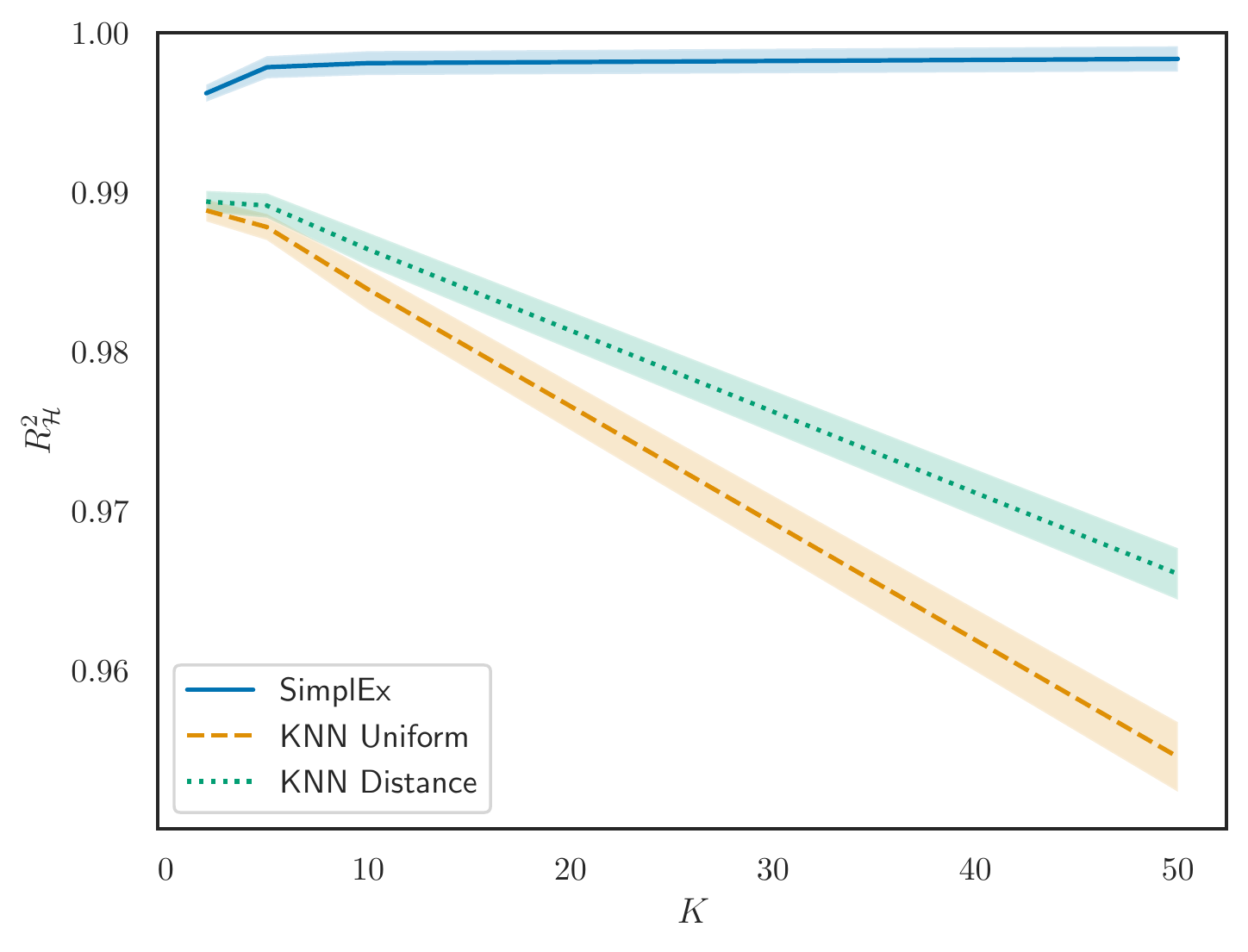}
  \caption{Latent $R^2$ score}
\end{subfigure}%
\begin{subfigure}{.4\textwidth}
  \centering
  \includegraphics[width=\linewidth]{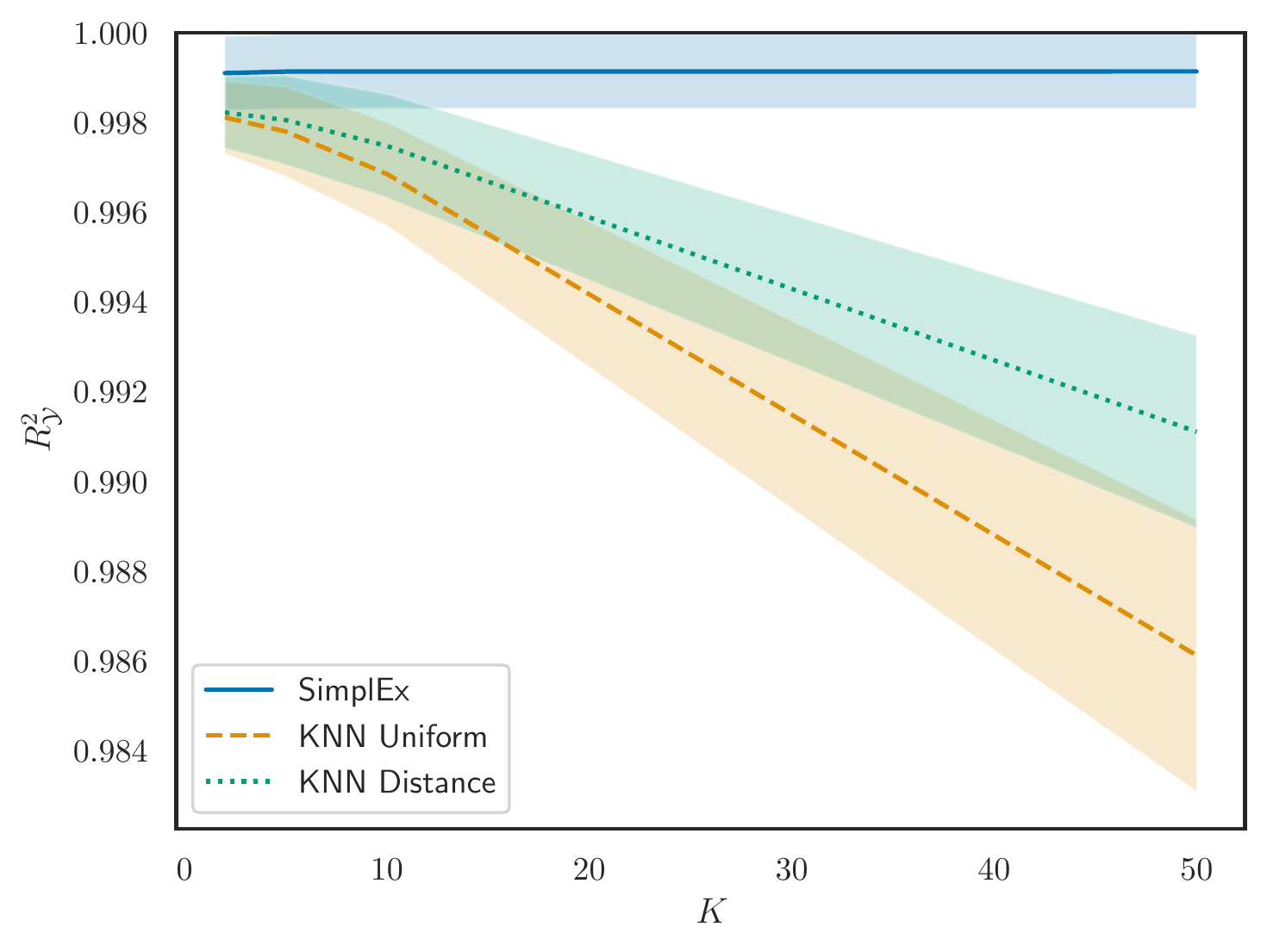}
  \caption{Output $R^2$ score}
\end{subfigure}
\end{center}
\caption{Precision of corpus decomposition for AR (avg $\pm$ std).}
\label{fig:ar_r2}
\end{figure}

\begin{figure}
\begin{center}
  \begin{subfigure}{.4\textwidth}
  \centering
  \includegraphics[width=\linewidth]{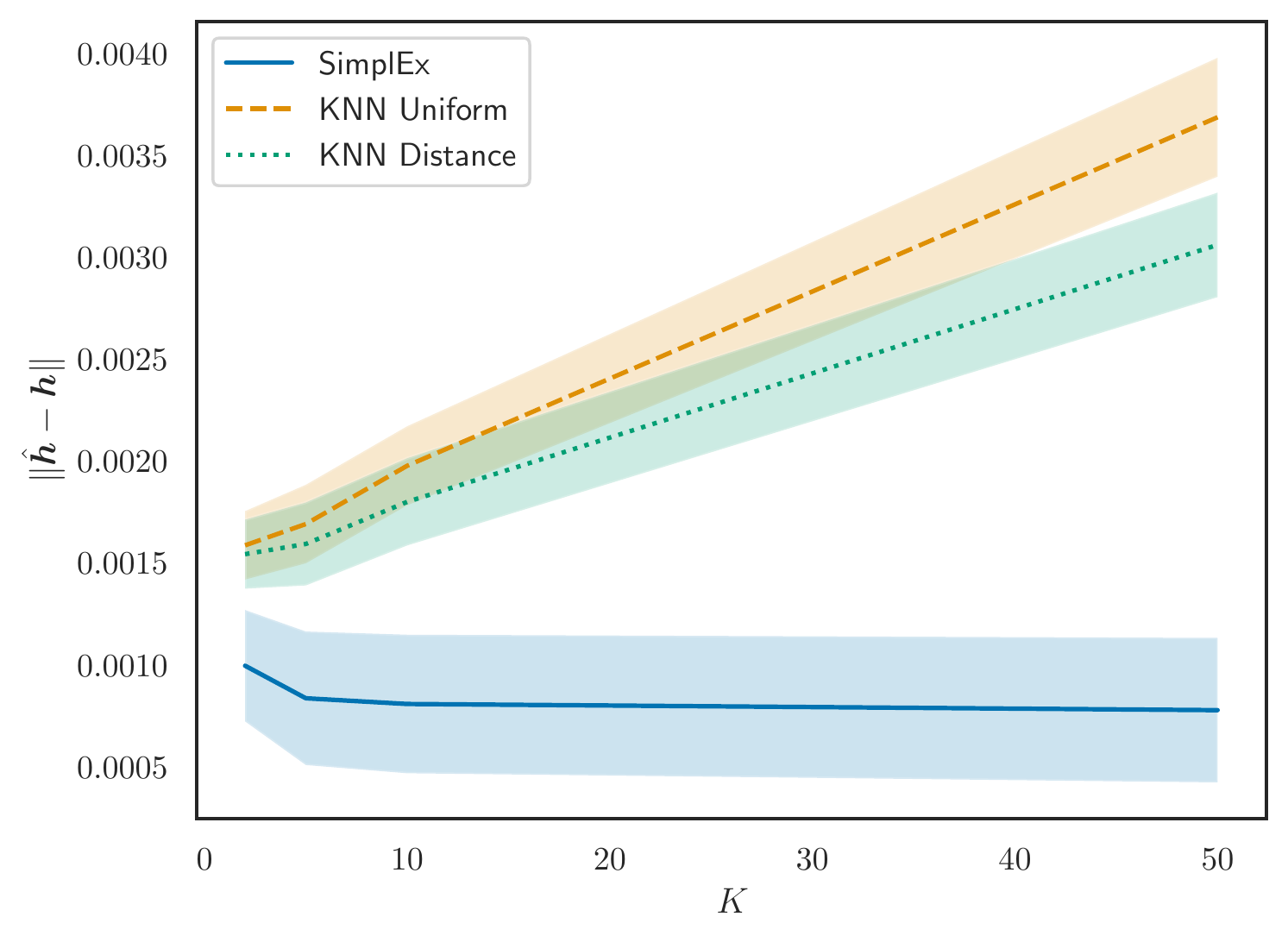}
  \caption{Norm of the latent error}
\end{subfigure}%
\begin{subfigure}{.4\textwidth}
  \centering
  \includegraphics[width=\linewidth]{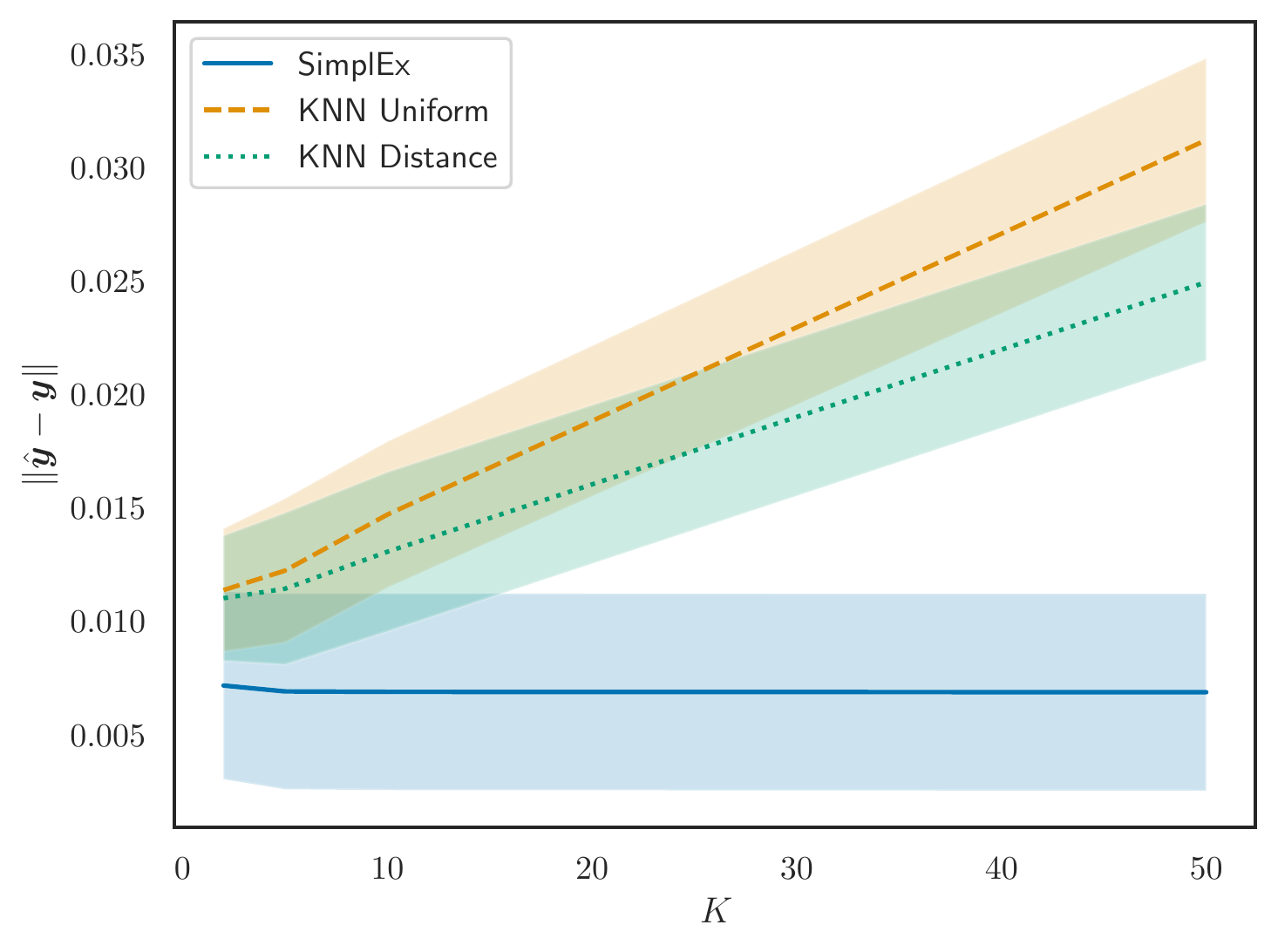}
  \caption{Norm of the output error}
\end{subfigure}
\end{center}
\caption{Precision of corpus decomposition for AR (avg $\pm$ std).}
\label{fig:ar_norm}
\end{figure}

\subsubsection{Detection of oscillating time series}

\textbf{Description} This experiment mirrors the EMNIST detection experiment from Section~\ref{subsec:emnist_detection}. We train a LSTM with a training set extracted from the previous AR dataset $\Dtrain \subset \D$. Next, we sample a corpus $\C \subset \Dtrain$ of size $C = 1,000$. We are now interested in investigating if the latent representation of test examples from another similar dataset can be distinguished from latent representations of traditional AR examples. To that aim, we use a dataset sampled from a distinct AR(2) process $\tilde{\D}$. To evaluate quantitatively if this difference matters for the model representation, we consider a mixed set of test examples $\T$ sampled from both $\D$ and $\tilde{\D}$: $\T \subset \Dtest \sqcup \tilde{\D}$. We sample 1,000 examples from both sources: $\mid \T \cap \Dtest \vert = \vert \T \cap \tilde{\D} \vert = 1,000$. For the rest, we follow the same procedure as in the clinical use-case: we approximate the latent representation of each example $\h \in \g (\T)$, compute the associated corpus residual $r_{\C}(\h)$ and sort the examples by decreasing order of residual.  

\textbf{Metrics}  We use the same metrics as in Section~\ref{subsec:clinical_use_case_sup}. We run the experiment 5 times to report standard deviations across different runs. 

\textbf{Baselines} We use the same baselines as in Section~\ref{subsec:clinical_use_case_sup}. 

\textbf{Data generation} We generate $\D$ as in the previous experiment. The time series in $\tilde{\D}$ are sampled from the following AR(2) process:
\begin{align} \label{equ:oscillating_ar}
\tilde{x}_t = - \varphi_1 \cdot \tilde{x}_{t-1} + \varphi_2 \cdot \tilde{x}_{t-2} + \epsilon_t \hspace{.5cm} \forall t \in [3:T+1],
\end{align}
where $\varphi_1$, $\varphi_2$ and $\epsilon_t$ are defined as in~\eqref{equ:original_ar}. The initial condition for the time series are sampled independently: $\tilde{x}_1, \tilde{x}_2 \sim \mathcal{N}(0,1)$. The only difference between $\D$ and $\tilde{\D}$ lies in the extra minus sign from \eqref{equ:oscillating_ar} compared to \eqref{equ:original_ar}. This gives an extra oscillating behaviour to the time series from $\tilde{\D}$. We generate 1,000 such instances that we use for testing purpose.

\textbf{Model} We use the same LSTM as in the previous experiment.

\textbf{Results} The results of this experiment are shown in Figure~\ref{fig:ar_outlier}. As in Section~\ref{subsec:clinical_use_case} of the main paper, the difference between $\D$ and $\tilde{\D}$ is imprinted in their latent representation. Once again, SimplEx offers the best detection scheme. 

\begin{figure}[h]
  \begin{center}
  \includegraphics[width=.6\textwidth]{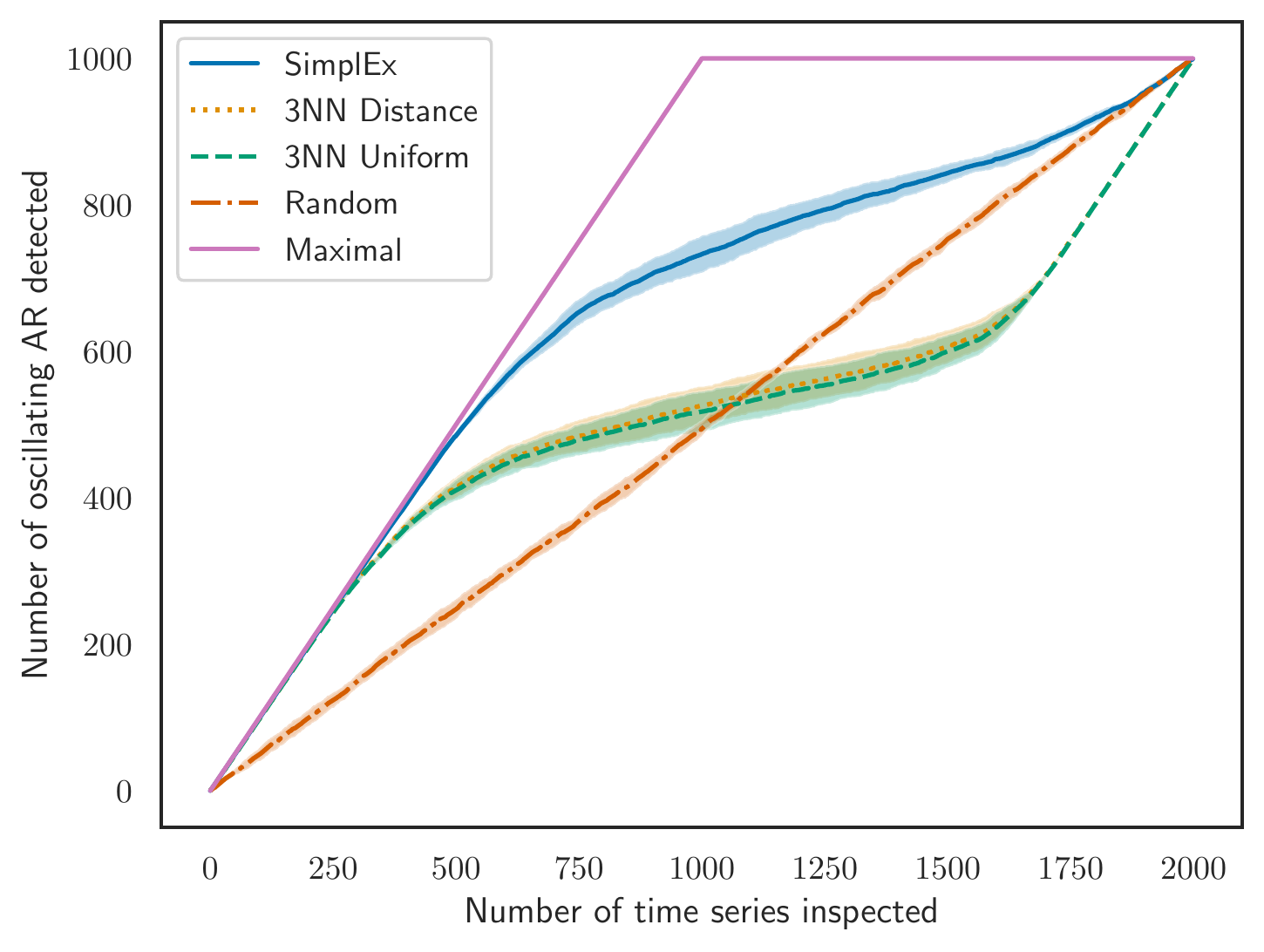}
  \end{center}
  \caption{Detection of oscillating AR.}
  \label{fig:ar_outlier}
\end{figure}

\subsection{A comparison with Influence Functions}

\textbf{Description }We note that there is no standard way to reconstruct the explicit black-box output $\f(\x)$ with the influence scores~\cite{Koh2017} for an input $\x \in \X \subset \R^{d_X}$. In contrast, SimplEx allows to explicitly decompose a black-box prediction in terms of contributions arising from each corpus example: $\f(\x) = \sum_{c=1}^C w^c \l (\h^c)$. An interesting question to ask is the following: can we interpret influence scores as reconstruction weights in latent space? To explore this question, we propose the following procedure. First, we compute the influence score $i^c \in \mathbb{R}$ for the prediction $\textbf{f}(\textbf{x})$ and for each corpus example $\textbf{x}^c \in \mathcal{C}$. Then, we extract the helpful examples from the corpus:  $\mathcal{C}_{help} = \{ x^c \in \mathcal{C} \mid i^c > 0 \}$. In the same spirit as in Section~\ref{subsec:precision_experiment} of the main paper, we select the $K$ most helpful examples from $\mathcal{C}_{help}$. We denote their corpus indices as $\mathcal{I} = \{ c_1, c_2, \dots, c_K \} \subset [C]$.
Finally, we make a corpus decomposition with weights proportional to the influence score:
\begin{align}
	w^c =  \left\{ \begin{array}{ll}
		\frac{i^c}{\sum_{k \in \mathcal{I}}i^{k}} & \text{if } c \in \mathcal{I} \\
		0 & \text{else}
	\end{array}\right.
\end{align}

\textbf{Metrics} We study the quality of influence-based corpus decomposition $\sum_{c=1}^C w^c \textbf{h}^c$ as an approximation of the test example's latent representation $\textbf{h} = \textbf{g}(\textbf{x})$. Therefore, we use the same metrics as in Section~\ref{subsec:precision_experiment} of the main paper.

\textbf{Baseline} We consider SimplEx as a baseline.

\textbf{Dataset} We perform the experiment with the MNIST dataset.

\textbf{Results} We report the result of this experiment in Figure~\ref{fig:influence} (average +/- standard deviation over 5 runs). This confirms that influence functions scores are not suitable to decompose the latent representations in terms of the corpus. 

\begin{figure}
	\begin{center}
		\includegraphics[width=.8\textwidth]{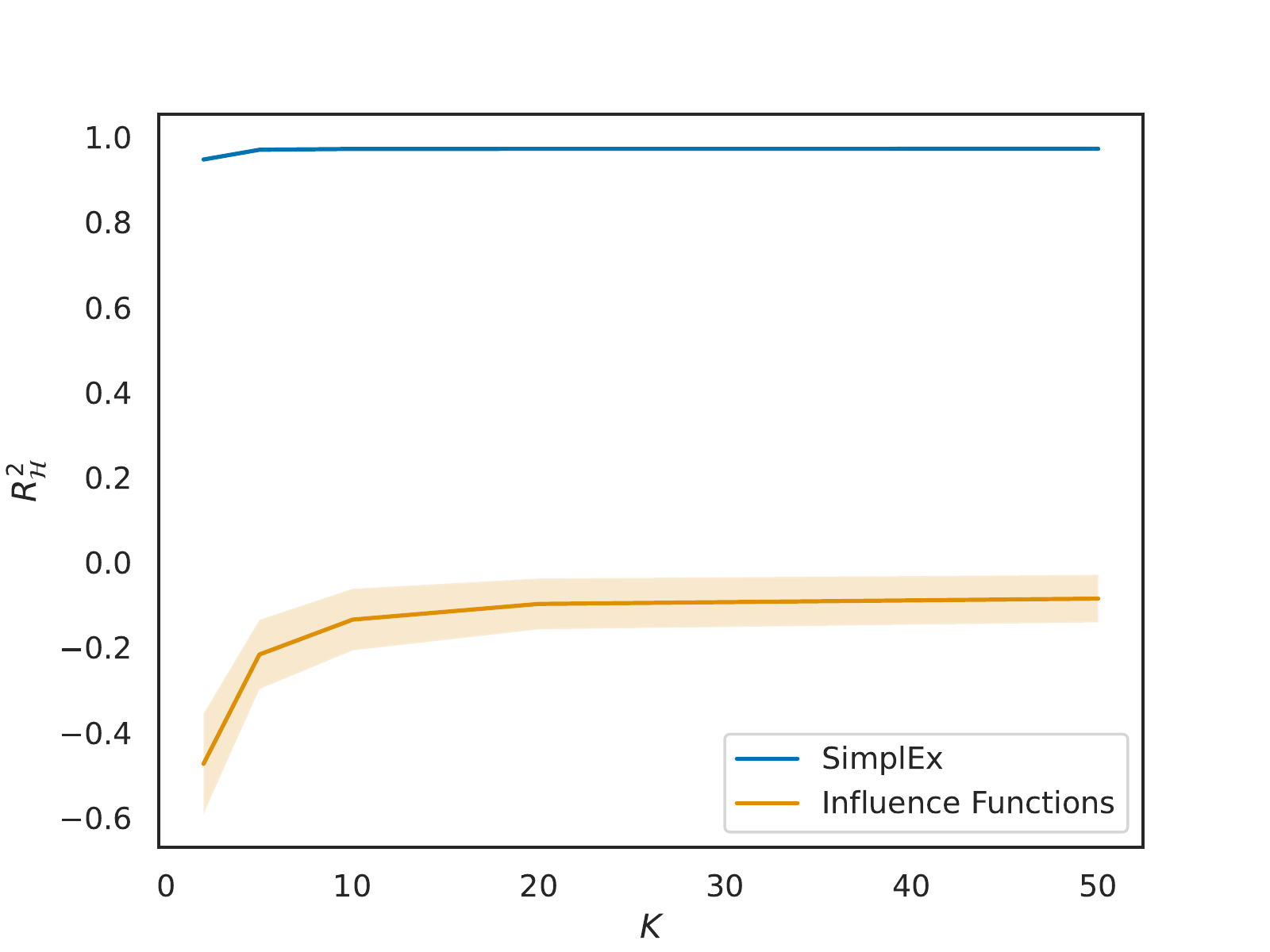}
	\end{center}
	\caption{Precision of the corpus decomposition in latent space.}
	\label{fig:influence}
\end{figure}

\subsection{More examples}

In Figures~\ref{fig:mnist_further_examples1}-\ref{fig:prostate_further_examples2}, we provide further examples of corpus decompositions with MNIST and SEER. 

\begin{figure}
  \begin{center}
  \includegraphics[width=\textwidth]{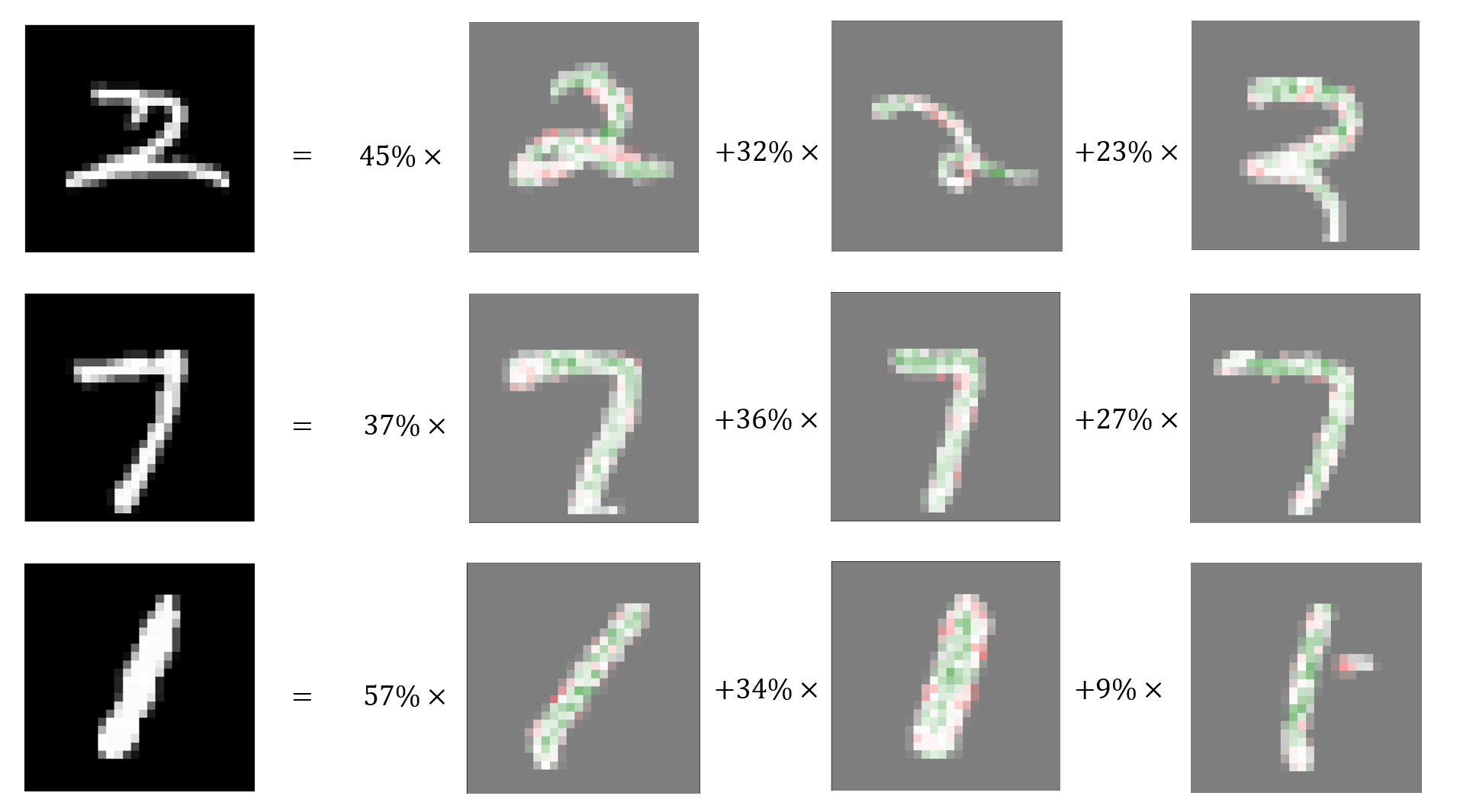}
  \end{center}
  \caption{Examples of MNIST decompositions (left: test example, right: corpus decomposition).}
  \label{fig:mnist_further_examples1}
\end{figure}

\begin{figure}
  \begin{center}
  \includegraphics[width=.8\textwidth]{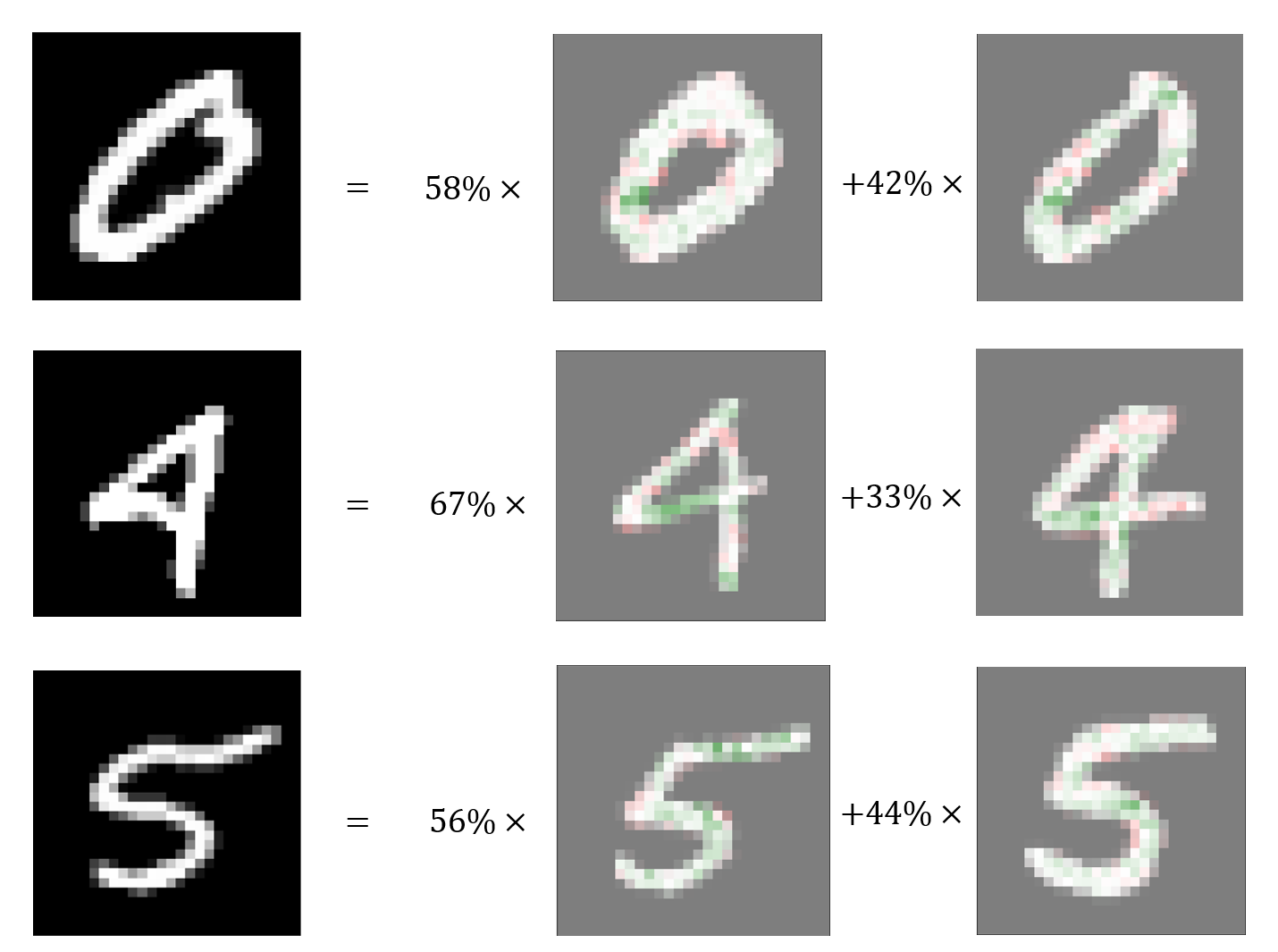}
  \end{center}
  \caption{Examples of MNIST decompositions (left: test example, right: corpus decomposition).}
  \label{fig:mnist_further_examples2}
\end{figure}

\begin{figure}
  \begin{center}
  \includegraphics[width=\textwidth]{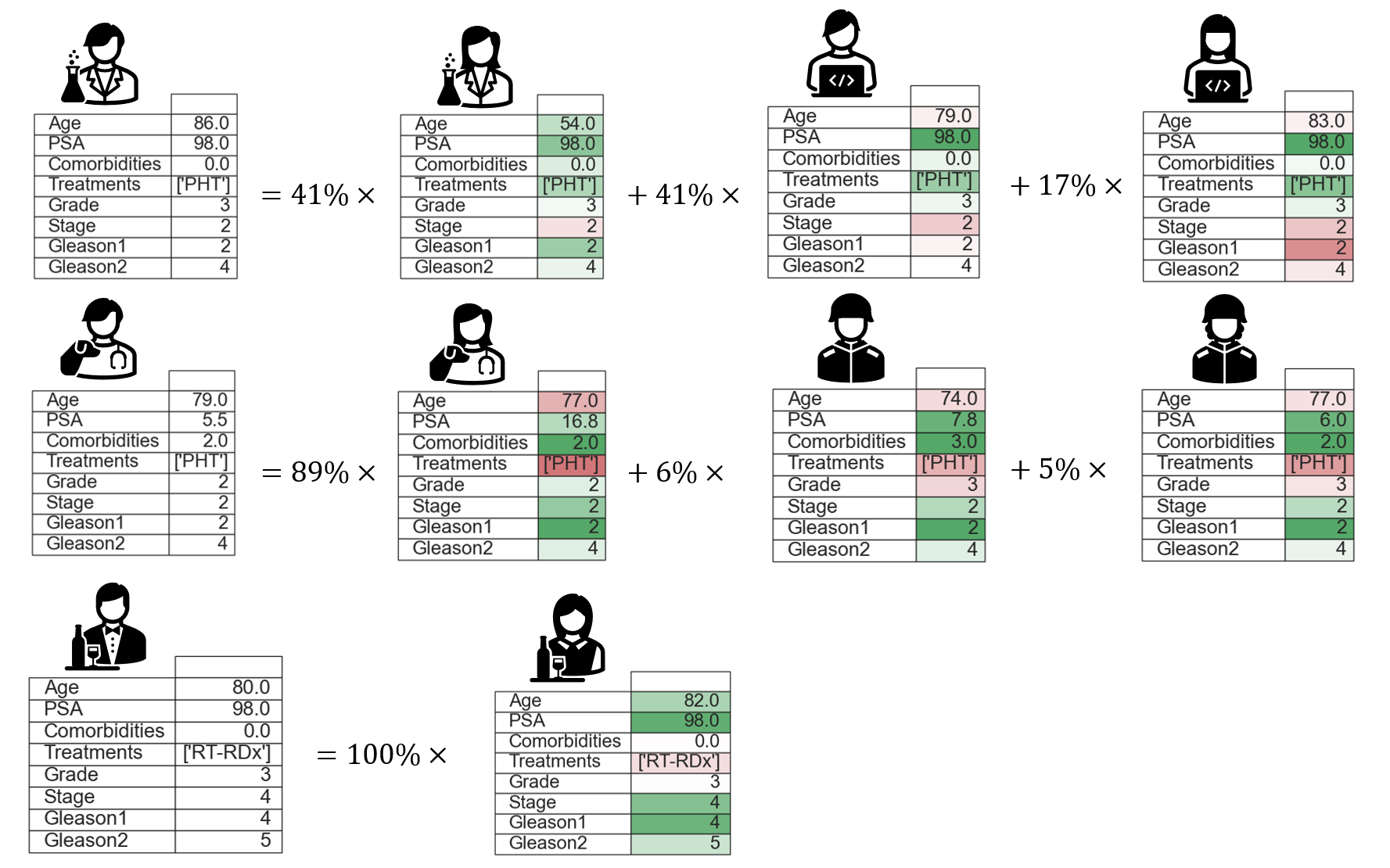}
  \end{center}
  \caption{Examples of SEER decompositions (left: test example, right: corpus decomposition).}
  \label{fig:prostate_further_examples1}
\end{figure}
  
\begin{figure}
  \begin{center}
  \includegraphics[width=.8\textwidth]{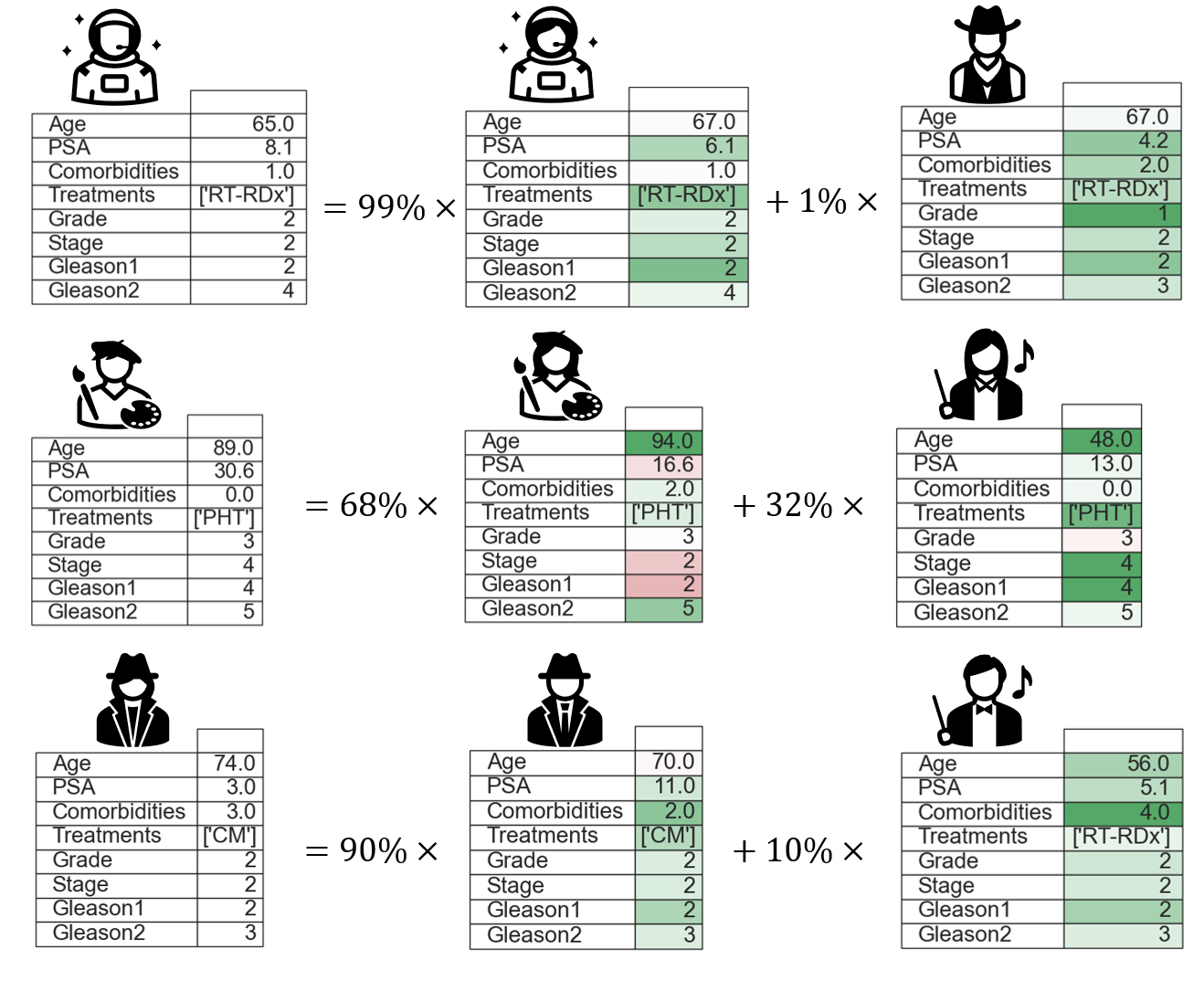}
  \end{center}
  \caption{Examples of SEER decompositions (left: test example, right: corpus decomposition).}
  \label{fig:prostate_further_examples2}
\end{figure}

\FloatBarrier

\section{User Study} \label{sec:user_study}
We have conducted a small scale user study with SimplEx. The purpose of this study was to identify if the functionalities introduced by SimplEx are interesting for the clinicians. In total, 10 clinicians took part in the study.

\begin{figure}
	\begin{center}
		\includegraphics[width=\textwidth]{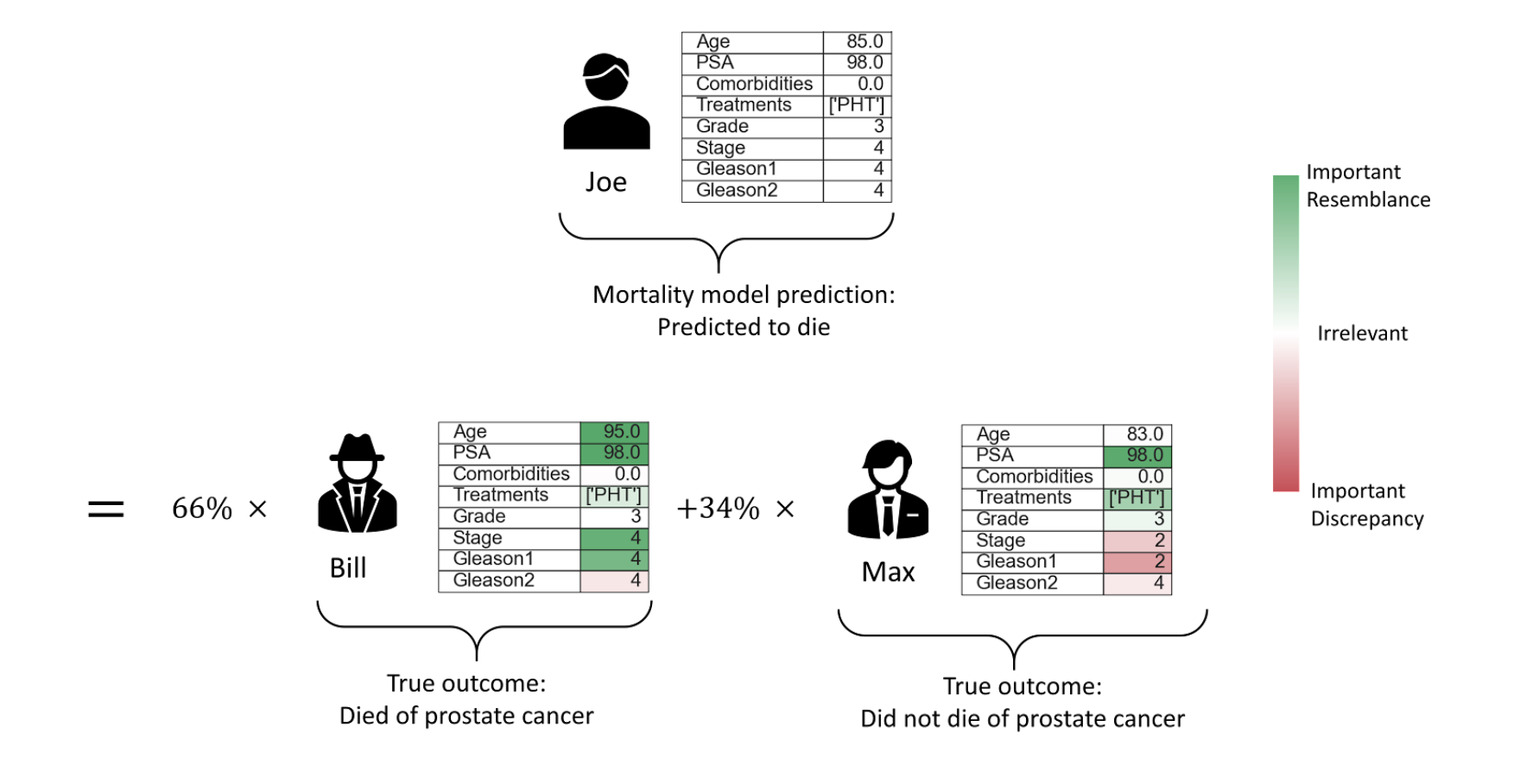}
	\end{center}
	\caption{SimplEx example provided in the user study.}
	\label{fig:user_study}
\end{figure}

Let us now describe the study. With the SEER Prostate Cancer dataset that is described in the paper, we have performed a SimplEx corpus decomposition presented in Figure~\ref{fig:user_study}. The decomposition involved 1 test patient that we called Joe and 2 corpus patients that we called Bill and Max. Our classification model predicted that Joe will die of his prostate cancer. Bill died of his prostate cancer and Max survived. The SimplEx corpus weights were as follows: 66\% for Bill, 34\% for Max. For both Bill and Max, the Jacobian Projections were given and presented as a measure of importance for each of their features in order to relate them to Joe.

After presenting this explanation to the clinician, we gradually brought their attention to its various components. We made several statements related to SimplEx's functionalities and asked the clinicians if they agree/disagree on a scale from 0 to 5, where 0 corresponds to strongly disagreeing, 3 corresponds to a neutral opinion and 5 corresponds to strongly agreeing.

The first two statements were related to the weights appearing in the corpus decomposition. The purpose was to determine if those are important for the clinicians and if there is an additional value in learning these weights, as is done in SimplEx. The first statement was the following: “The value of the weights in the corpus decomposition is important”. The results were the following: 6 of the clinicians agreed (1 strongly), 1 remained neutral and 3 disagreed (1 strongly). The second statement was the following: “Some valuable information is lost in setting the weights to a uniform value” (the doctors are given the KNN Uniform equivalent of SimplEx's explanation in the presented case). The results were the following: 5 of the clinicians agreed (3 strongly), 3 remained neutral, 2 strongly disagreed. We conclude that the majority of the clinicians found the weights to be important. Most of them found that hard-coding the weights as in the KNN Uniform baseline hides some valuable information.

The third statement was related to the Jacobian Projections. The purpose was to determine if the Jacobian Projections provide valuable information for interpretability. The statement was the following: “Knowing which feature increases the similarity/discrepancy between two patients is important”. The results were the following: 9 of the clinicians agreed (5 strongly), 1 disagreed. We conclude that the Jacobian Projections constitute a crucial part of SimplEx's explanations.

The fourth statement was related to the freedom of choosing the corpus. The purpose was to determine if the flexibility of SimplEx is useful in practice. The statement was the following: “It is important for the clinician to be able to choose the patients in the corpus that is used for comparison”. The results were the following: 4 of the clinicians agreed (1 strongly), 1 remained neutral, 5 disagreed (3 strongly). Clearly, the clinicians are more divided on this point. However, this additional freedom offered by SimplEx comes at no cost. A clinician that desires explanations in terms of patients they are familiar with can use their own corpus. A clinician that is happy with explanations in terms of any patients can use a corpus sampled from training data.

The last statement was related to the use of SimplEx in order to anticipate misclassification, as it is suggested in Section 3.2 of the main paper. The statement was the following: “If Bill had not died due to his prostate cancer, this would cast doubt on the mortality predicted for Joe”. The results were the following: 6 of the clinicians agreed (2 strongly), 1 remained neutral, 3 disagreed (2 strongly). We conclude that, for the majority of the clinicians, SimplEx's explanations affect their confidence in the model's prediction. 
\end{document}